\newcommand{\E}{\mathbb{E}}
\newcommand{\R}{\mathbb{R}}
\newcommand{\KL}{\mathop{\mathrm{KL}}}
\newtheorem{thm}{Theorem}
\newtheorem{lemma}{Lemma}
\icmltitlerunning{Optimal Rates for Stochastic Convex Optimization}
\begin{document} 

\twocolumn[
\icmltitle{Optimal rates for first-order stochastic convex optimization\\ under 
Tsybakov noise condition}

\icmlauthor{Aaditya Ramdas}{aramdas@cs.cmu.edu}
\icmladdress{Carnegie Mellon University,
            5000 Forbes Ave, Pittsburgh, PA 15213, USA}
\icmlauthor{Aarti Singh}{aarti@cs.cmu.edu}
\icmladdress{Carnegie Mellon University,
            5000 Forbes Ave, Pittsburgh, PA 15213, USA}

\icmlkeywords{machine learning, stochastic convex optimization, minimax rates, generalized uniform convexity, tsybakov noise condition}

\vskip 0.3in
]

\begin{abstract} 
We focus on the problem of minimizing a convex function $f$ over a convex set $S$ given $T$ queries to a stochastic first order oracle. We argue that the complexity of convex minimization is only determined by the rate of growth of the function around its minimizer $x^*_{f,S}$, as quantified by a Tsybakov-like noise condition. Specifically, we prove that if $f$ grows at least as fast as $\|x-x^*_{f,S}\|^\kappa$ around its minimum, for some $\kappa > 1$, then the optimal rate of learning $f(x^*_{f,S})$ is  $\Theta(T^{-\frac{\kappa}{2\kappa-2}})$. The classic rate $\Theta(1/\sqrt T)$ for convex functions and $\Theta(1/T)$ for strongly convex functions are special cases of our result for $\kappa \rightarrow \infty$ and $\kappa=2$, and even faster rates are attained for $\kappa <2$. We also derive tight bounds for the complexity of learning $x_{f,S}^*$, where the optimal rate is $\Theta(T^{-\frac{1}{2\kappa-2}})$. Interestingly, these precise rates for convex optimization also characterize the complexity of active learning and our results further strengthen the connections between the two fields, both of which rely on feedback-driven queries.
\end{abstract}

\section{Introduction and problem setup}
\label{intro}
Stochastic convex optimization in the first order oracle model is the task of approximately minimizing a convex function over a convex set, given oracle access to unbiased estimates of the function and gradient at any point, by using as few queries as possible \cite{NY83}.

A function $f$ is convex on $S$ if, for all $x,y \in S$, $t \in [0,1]$,
\begin{equation*}
f(tx + (1-t)y) \leq tf(x) + (1-t)f(y)
\end{equation*}
$f$ is Lipschitz with constant $L$ if for all $x,y \in S$,
\begin{equation*}
|f(x) - f(y)| \leq L \|x-y\|
\end{equation*}
Equivalently, for subgradients $g_x \in \partial f(x)$, $\|g_x\|_* \leq L$.

Without loss of generality, everywhere in this paper we shall always assume $\|.\| = \|.\|_*=\|.\|_2$, and we shall always deal with convex functions with $L=1$. Furthermore, we will consider 
the set $S\subseteq \R^d$ to be closed bounded convex sets with diameter $D = \max_{x,y\in S} \|x-y\| \leq 1$. Let the collection of all such sets be $\mathbb{S}$. Given $S \in \mathbb{S}$, let the set of all such convex functions on $S$ be $\mathcal{F}^C$ (with $S$ implicit).

A stochastic first order oracle is a function that accepts $x \in S$ as input, and returns $(\hat{f}(x),\hat{g}(x))$ where $\E[\hat{f}(x)] = f(x)$, $\E[\hat{g}(x)]=g(x)$ (and furthermore, they have unit variance)  where $g(x) \in \partial f(x)$ and the expectation is over any internal randomness of the oracle. Let the set of all such oracles be $\mathcal{O}$. As we refer to it later in the paper, we note that a stochastic zeroth order oracle is defined analogously but only returns unbiased function values and no gradient information.

An optimization algorithm is a method $M$ that repeatedly queries the oracle at points in $S$ and returns $\hat{x}_T$ as an estimate of the optimum of $f$ after $T$ queries. Let the set of all such procedures be $\mathcal{M}$. A central question of the field is \textit{``How close can we get to the optimum of a convex function given a budget of $T$ queries?''}.

Let $x^*_{f,S}=\arg \min_{x \in S}f(x)$. Distance of an estimate $\hat{x}_T$ to the optimum $x^*_{f,S}$ can be measured in two ways. We define the \textit{function-error} and \textit{point-error} of $M$ as: 
\begin{equation*} \epsilon_T (M,f,S,O) = f(\hat{x}_T)-f(x^*_{f,S})  
\end{equation*}
\begin{equation*} \rho_T (M,f,S,O) = \|\hat{x}_T - x^*_{f,S}\| 
\end{equation*}

There has been a lot of past work on worst-case bounds for $\epsilon_T$ for common function classes. Formally, let
\begin{equation*} \epsilon^*_T (\mathcal{F}) = \sup_{O \in \mathcal{O}} \sup_{S \in \mathcal{S}} \inf_{M \in \mathcal{M}} \sup_{f \in \mathcal{F}} \E_O[\epsilon_T (M,f,S,O)]
\end{equation*}
\begin{equation*} \rho^*_T (\mathcal{F}) = \sup_{O \in \mathcal{O}} \sup_{S \in \mathcal{S}} \inf_{M \in \mathcal{M}} \sup_{f \in \mathcal{F}} \E_O[\rho_T (M,f,S,O)]
\end{equation*}

It is well known \cite{NY83} that for the set of all convex functions, $\epsilon^*_T (\mathcal{F}^C) = \Theta(1/\sqrt T)$. However, better rates are possible for smaller classes, like that of strongly convex functions, $\mathcal{F^{SC}}$. 

A function $f$ is strongly convex  on $S$ with parameter $\lambda > 0$ if for all  $x,y \in S$ and for all $t \in [0,1]$,
\begin{equation*}
f(tx + (1-t)y) \leq tf(x) + (1-t)f(y) -\frac{1}{2}\lambda t(1-t)\|x-y\|^2
\end{equation*}
Intuitively, this condition means that $f$ is lower bounded by a quadratic everywhere (in contrast, convex functions are lower bounded by a hyperplane everywhere). Again, it is well known \cite{NY83, ABRW10, HK11} that that for the set of all strongly convex functions, $\epsilon^*_T (\mathcal{F}^{SC}) = \Theta(1/ T)$. An immediate geometric question arises - what property of strongly convex functions allows them to be minimized quicker?  

In this work, we answer the above question by characterizing precisely what determines the optimal rate and we derive what exactly that rate is for more general classes. We intuitively describe why such a characterization holds true and what it means by connecting it to a central concept in active learning. These bounds are shown to be tight for both function-error $f(x) - f(x^*_{f,S})$ and the less used, but possibly equally important, point-error $\|x-x^*_{f,S}\|$. 

We claim that the sole determining factor for minimax rates is a condition about the growth of the function only around its optimum, and not a global condition about the strength of its convexity everywhere in space. For strongly convex functions, we get the well-known result that for optimal rates it is sufficient for the function to be lower bounded by a quadratic only around its optimum (not everywhere). 

As we shall see later, any $f \in \mathcal{F}^{SC}$ satisfies
\begin{equation}
f(x) - f(x^*_{f,S}) \geq \frac{\lambda}{2}\|x-x^*_{f,S}\|^2
\end{equation}
On the same note, given a set $S \in \mathbb{S}$, let $\mathcal{F}^\kappa$ represent the set of all convex functions such that for all $x \in S$
\begin{equation}
\label{SCTNC}
f(x) - f(x^*_{f,S}) \geq \frac{\lambda}{2}\|x-x^*_{f,S}\|^\kappa
\end{equation}
for some $\kappa \geq 1$. This forms a nested hierarchy of classes of $\mathcal{F}^C$, with $\mathcal{F}^{\kappa_1} \subset \mathcal{F}^{\kappa_2}$ whenever $\kappa_1 < \kappa_2$. Also notice that $\mathcal{F}^2 \supseteq \mathcal{F}^{SC}$ and $\bigcup_\kappa \mathcal{F}^\kappa \subseteq \mathcal{F}^C$. For any finite $\kappa < \infty$, this condition automatically ensures that the function is strictly convex and hence the minimizer is well-defined and unique.

Then we can state our main result as:

\begin{thm}
\label{main}
Let $\mathcal{F}^\kappa$ ($\kappa > 1$) be the set of all $1$-Lipschitz convex functions on $S \in \mathbb{S}$ satisfying $f(x) - f(x^*_{f,S}) \geq \frac{\lambda}{2}\|x-x_{f,S}^*\|^\kappa$ for all $x \in S$ for some $\lambda > 0$. Then, for first order oracles, we have $\epsilon^*_T (\mathcal{F}^\kappa) = \Theta(T^{-\frac{\kappa}{2\kappa-2}})$ and $\rho^*_T (\mathcal{F}^\kappa)=\Theta(T^{-\frac{1}{2\kappa-2}})$. Also, for zeroth order oracles, we have $\epsilon^*_T (\mathcal{F}^\kappa) = \Omega(1/\sqrt T)$ and $\rho^*_T (\mathcal{F}^\kappa)=\Omega(T^{-\frac{1}{2\kappa}})$.
\end{thm}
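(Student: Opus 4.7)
The plan is to run an epoch-based projected stochastic (sub)gradient method, in the spirit of Epoch-GD and the restart schemes of Juditsky--Nesterov. In epoch $k$, maintain an anchor $x_k$ and a trust-region radius $R_k$ with the invariant $x^*_{f,S}\in B(x_k,R_k)$. Run $T_k$ steps of vanilla projected SGD on $B(x_k,R_k)\cap S$ with step size $\eta_k=R_k/\sqrt{T_k}$; the classical analysis for $1$-Lipschitz convex optimization on a set of diameter $R_k$ with unit-variance stochastic gradients gives
\begin{equation*}
\E[f(x_{k+1})-f(x^*_{f,S})] \;=\; O\!\left(R_k/\sqrt{T_k}\right).
\end{equation*}
The growth condition \eqref{SCTNC} then converts this into a point-error guarantee $\E\|x_{k+1}-x^*_{f,S}\|^\kappa = O(R_k/(\lambda\sqrt{T_k}))$. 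Choosing $T_k$ so that $R_{k+1}=R_k/2$ is an admissible new radius forces $T_k\asymp R_k^{-2(\kappa-1)}\asymp 4^{(\kappa-1)k}$; after $K$ epochs and a total of $T=\sum_{k\le K}T_k\asymp 4^{(\kappa-1)K}$ queries we obtain $R_K\asymp T^{-1/(2\kappa-2)}$ and $\E[f(x_K)-f(x^*_{f,S})]\le \tfrac{\lambda}{2}R_K^\kappa=O(T^{-\kappa/(2\kappa-2)})$, matching both upper bounds of Theorem~\ref{main}. The one subtle piece of bookkeeping is converting the in-expectation SGD bound into a high-probability statement so the invariant $x^*_{f,S}\in B(x_k,R_k)$ propagates across all $K=\Theta(\log T)$ epochs via a union bound and a martingale concentration argument on the SGD iterates.

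\textbf{First-order lower bound via Le Cam.}
I would use the two-point method. Construct a pair of scalar convex templates $f_\pm:[-1,1]\to\R$ that are $1$-Lipschitz, minimized at $\pm\Delta$ with common value $0$, grow as $\tfrac{\lambda}{2}|x\mp\Delta|^\kappa$ in a neighborhood of their minima, and smoothly transition to slope $\pm 1$ outside that neighborhood (so both $1$-Lipschitzness and $\kappa$-growth hold globally on $[-1,1]$). The key computation is that on the ``confusion region'' spanning both minima the subgradients of $f_+$ and $f_-$ differ by only $O(\Delta^{\kappa-1})$ pointwise. Against the canonical oracle that adds unit-variance Gaussian noise to a true subgradient, the per-query KL is then $O(\Delta^{2(\kappa-1)})$; by the KL chain rule (valid under adaptive querying) the KL between $T$-query transcripts is $O(T\Delta^{2(\kappa-1)})$. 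Le Cam (equivalently Pinsker) forces $\Delta\asymp T^{-1/(2\kappa-2)}$ to prevent reliable discrimination, which gives $\rho^*_T(\mathcal{F}^\kappa)=\Omega(T^{-1/(2\kappa-2)})$; since $f_+(-\Delta)-f_+(\Delta)\asymp\Delta^\kappa$, this also yields $\epsilon^*_T(\mathcal{F}^\kappa)=\Omega(T^{-\kappa/(2\kappa-2)})$.

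\textbf{Zeroth order and main obstacle.}
With a zeroth-order oracle, the \emph{pointwise} gap $|f_+(x)-f_-(x)|$ is $O(\Delta^\kappa)$ uniformly on $[-1,1]$, so the per-query KL drops to $O(\Delta^{2\kappa})$; the Le Cam condition becomes $\Delta\asymp T^{-1/(2\kappa)}$, yielding $\rho^*_T=\Omega(T^{-1/(2\kappa)})$ and $\epsilon^*_T=\Omega(\Delta^\kappa)=\Omega(T^{-1/2})$, as claimed. I expect the hardest step to be \emph{constructing and verifying} the templates $f_\pm$: they must be simultaneously globally convex, globally $1$-Lipschitz, satisfy the $\kappa$-growth condition with a uniform constant $\lambda$ over all of $S$ (not just locally), and have subgradients that are $O(\Delta^{\kappa-1})$-close in $L^2$ on the relevant region while still producing the desired $\Delta^\kappa$ separation in function value at the opposite minimizer. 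Once this template exists, both the KL chain-rule reduction to a two-hypothesis test and the Epoch-GD upper-bound argument are comparatively standard.
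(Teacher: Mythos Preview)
Your proposal is correct and follows essentially the same route as the paper: epoch-based projected SGD with geometrically shrinking radii for the upper bound (the paper's Epoch-GD, also adapted from Hazan--Kale with a high-probability per-epoch guarantee and a union bound over $O(\log T)$ epochs), and a two-point Le Cam/Fano argument with Gaussian-noise oracles for the lower bounds, deriving per-query KL of order $\Delta^{2(\kappa-1)}$ (first order) and $\Delta^{2\kappa}$ (zeroth order) exactly as you outline. The construction you flag as the hardest step is resolved in the paper by taking $f_0(x)=c_\kappa\|x\|_\kappa^\kappa$ and $f_1$ equal to a translate of $f_0$ in the first coordinate on $\{x_1\le 4a\}$, glued to $f_0$ elsewhere; this makes the $1$-Lipschitz, global $\kappa$-growth, and $O(a^{\kappa-1})$ subgradient-gap verifications routine (Appendix Lemma~6), and may save you the effort of designing your own smooth transition.
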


Note that for $\epsilon^*_T$ we get faster rates than $1/T$ for $\kappa < 2$. For example, if we choose $\kappa = 3/2$, then we surprisingly get $\epsilon^*_T(\mathcal{F}^{3/2})=\Theta(T^{-3/2})$.

The proof idea in the lower bound arises from recognizing that the growth condition in equation (\ref{SCTNC}) closely resembles the Tsybakov noise condition (TNC) \footnote{Sometimes goes by Tsybakov margin/regularity condition \cite{KT93,T09}} from statistical learning literature, which is known to determine minimax rates for passive and active classification \cite{T09,CN07} and level set estimation \cite{T97,S09}. 

Specifically, we modify a proof from \cite{CN07} that was originally used to find the minimax lower bound for active classification where the TNC was satisfied at the decision boundary. We translate this to our setting to get a lower bound on the optimization rate, where the function satisfies a convexity strength condition at its optimum. One can think of the rate of growth of the function around its minimum as determining how much the oracle's noise will drown out the true gradient information, thus measuring the signal to noise ratio near the optimum.

\cite{RR09} notice that stochastic convex optimization and active learning have similar flavors because of the role of feedback and sequential dependence of queries. 
Our results make this connection more precise by demonstrating that the complexity of convex optimization in d-dimensions
is precisely the same as the complexity of active learning in 1 dimension.
Specifically, the rates we derive for 
function error and point error in first-order stochastic convex 
optimization of a d-dimensional function are precisely the same as the rates for 
classification error and error in localizing the decision boundary, respectively, in 
1-dimensional active learning \cite{CN07}. 

This result agrees with intuition since in 
1 dimension, finding the
decision boundary and the minimizer are equivalent to finding the zero-crossing of the regression function, $P(Y|X=x)-1/2$, or the zero-point of the gradient, respectively (see Section~\ref{conn1d} for details). Thus in 1D, it requires the same number of 
samples or time steps to find the
decision boundary or the minimizer, respectively, using feedback-driven
queries. 
In higher dimensions, the decision boundary becomes a multi-dimensional set whereas, for a
convex function, the minimizer continues to be the point of zero-crossing of the gradient.
Thus, rates for active learning degrade exponentially in dimension, whereas rates for 
first-order stochastic convex optimization don't.

For upper bounds, we slightly alter a recent variant of gradient descent from \cite{HK11} and prove that it achieves the lower bound. While there exist algorithms in passive (non-active) learning that achieve the minimax rate without knowing the true behaviour at the decision boundary, unfortunately our upper bounds depend on knowing the optimal $\kappa$.


\subsection{Summary of contributions}
\begin{itemize}
\item We provide an interesting connection between strong convexity (more generally, uniform convexity) and the Tsybakov Noise Condition which is popular in statistical learning theory \cite{T09}. Both can be interpreted as the amount by which the signal to noise ratio decays on approaching the minimum in optimization or the decision boundary in classification. 
\item We use the above connection to strengthen the relationship between the fields of active learning and convex optimization, the seeds of which were sown in \cite{RR09} by showing that the rates for first-order stochastic convex optimization of a $d$-dimensional function are precisely the rates for $1$-dimensional active learning. 
\item Using proof techniques from active learning \cite{CN07}, we get lower bounds for a hierarchy of function classes $\mathcal{F}^\kappa$, generalising known results for convex, strongly convex \cite{NY83}, \cite{ABRW10} and uniformly convex classes \cite{ST10}. 
\item We show that the above rates are tight (all $\kappa > 1$) by generalising an algorithm from \cite{HK11} that was known to be optimal for strongly convex functions, and also reproduce the optimal rates for $\kappa$-uniformly convex functions (only defined for $\kappa \geq 2$) \cite{JN10}. 
\item Our lower bounding proof technique also gets us, for free, lower bounds for the derivative free stochastic zeroth-order oracle setting, a generalization of those derived in \cite{JNR12}. 
\end{itemize}

\section{From Uniform Convexity to TNC}

A function $f$ is said to be $\kappa$-uniformly convex ($\kappa \geq 2$) on $S \in \mathbb{S}$ if, for all $x,y \in S$ and all $t \in [0,1]$,
$$f(tx + (1-t)y) \leq tf(x) + (1-t)f(y) - \frac{1}{2}\lambda t(1-t)\|x-y\|^\kappa$$
for some $\lambda>0$ \cite{JN10}. 

An equivalent first-order condition, is that for any subgradient $g_x \in \partial f(x)$, we have for all $x,y \in S$,
\begin{equation}
\label{SC}
f(y) \geq f(x) + g_x^\top(y-x) + \frac{\lambda}{2}\|y-x\|^\kappa
\end{equation}
When $\kappa=2$, this is well known as strong convexity. It is well known that since $0 \in \partial f(x^*_{f,S})$, we have for all $x \in S$,
\begin{equation}
f(x) \geq f(x^*_{f,S}) + \frac{\lambda}{2}\|x-x^*_{f,S}\|^\kappa
\end{equation}
This local condition is strictly weaker than (\ref{SC}) and it only states that the function grows at least as fast as $\|x-x^*_{f,S}\|^\kappa$ around its optimum. This bears a striking resemblance to the Tsybakov Noise Condition (also called the regularity or margin condition) from the statistical learning literature.

\paragraph{Tysbakov's Noise Condition} We reproduce a relevant version of the condition from \cite{CN07}. Define $\eta(x) := P(\ell(x)=1|x)$, where $\ell(x)$ is the label of point $x$. Let $x^*$ be the closest point to $x$ such that $\eta(x^*) = 1/2$, ie on the decision boundary. $\eta$ is said to satisfy the TNC with exponent $\kappa \geq 1$ if 
\begin{equation}
|\eta (x) - \eta (x^*)| \geq \lambda \|x-x^*\|^\kappa
\end{equation}
for all $x$ in such that $|\eta (x) - 1/2| \leq \delta$ with $\delta > 0$. 

It is natural to conjecture that the strength of convexity and the TNC play similar roles in determining minimax rates, and that rates of optimizing functions should really \textit{only} depend on a TNC-like condition around their minima, motivating the definition of $\mathcal{F}^\kappa$ in equation \ref{SCTNC}. We emphasize that though uniform convexity is not defined for $\kappa < 2$, $\mathcal{F}^\kappa$ is well-defined for $\kappa \geq 1$ (see Appendix, Lemma 1).

The connection of the strength of convexity around the optimum to TNC is very direct in one-dimension, and we shall now see that it enables us to use an active classification algorithm to do stochastic convex optimization.

\subsection{Making it transparent in 1-D}
\label{conn1d}

We show how to reduce the task of stochastically optimizing a one-dimensional convex function to that of active classification of signs of a monotone gradient. For simplicity of exposition, we assume that the set $S$ of interest is $[0,1]$, and $f$ achieves a unique minimizer $x^*$ inside the set $(0,1)$.

Since $f$ is convex, its true gradient $g$ is an increasing function of $x$ that is negative before $x^*$ and positive after $x^*$. Assume that the oracle returns gradient values corrupted by unit variance gaussian noise \footnote{The gaussian assumption is only for this subsection}. Hence, one can think of $sign(g(x))$ as being the true label of point $x$, $sign(g(x)+z)$ as being the observed label, and finding $x^*$ as learning the decision boundary (the point where labels switch signs). If we think of $\eta(x) = P(sign(g(x)+z) = 1|x)$, then minimizing $f$ corresponds to identifying the Bayes classifier $[x^*,1]$ because the point at which $\eta(x)=0.5$ is where $g(x)=0$, which is $x^*$. 

If $f(x) - f(x^*) \geq \lambda \|x-x^*\|^\kappa$, then $|g_x| \geq \lambda \|x-x^*\|^{\kappa-1}$(see Appendix, Lemma 2). Let us consider a point $x$ which is a distance $t>0$ to the right of $x^*$ and hence has label $1$ (similar argument for $x < x^*$). 

So, for all $g_x \in \partial f(x), \ g_x \geq \lambda t^{\kappa-1}$. In the presence of gaussian noise $z$, the probability of seeing label $1$ is the probability that we draw $z$ in $(-g_x,\infty)$ so that the sign of $g_x+z$ is still positive. This yields:
\begin{equation*}
\eta(x) \ \ =\ \  P(g_x + z > 0) \ \ =\ \  0.5 + P(-g_x < z < 0) 
\end{equation*}
Note that the probability mass of a gaussian grows linearly around its mean (Appendix, Lemma 3); ie, for all $t < \sigma$ there exist constants $a_1,a_2$ such that $a_1t \leq P(0 \leq z \leq t) \leq a_2t$. So, we get
\begin{eqnarray}
\eta(x) \ \ \geq \ \ \ 0.5 + a_1\lambda t^{\kappa-1} \nonumber \\
\implies \ \ \ \ \ \   |\eta(x) - 1/2| \geq a_1\lambda|x-x^*|^{\kappa-1} \label{bz}
\end{eqnarray}
Hence, $\eta(x)$ satisfies TNC with exponent $\kappa-1$.

\cite{CN07} provide an analysis of the Burnashev-Zigangirov (BZ) algorithm, which is a noise-tolerant variant of binary bisection, when the regression function $\eta(x)$ obeys a TNC like in equation \ref{bz}. The BZ algorithm solves the one-dimensional active classification problem such that after making $T$ queries for a noisy label, it returns a confidence interval $\hat{I}_T$ which contains $x^*$ with high probability, and $\hat{x}_T$ is chosen to be the midpoint of $\hat{I}_T$. They bound the excess risk $\int_{[x,1]\Delta[x^*,1]}|2\eta(x)-1|dx$ where $\Delta$ is the symmetric difference operator over sets but small modifications to their proofs (see Appendix, Lemma 4) yield a bound on $\E|\hat{x}_T - x^*|$.

The setting of $\kappa =1$ is easy because the regression function is bounded away from half (the true gradient doesn't approach zero, so the noisy gradient is still probably the correct sign) and we can show an exponential convergence of  $\E(|\hat{x}_T-x^*|) = O(e^{-T\lambda^2/2})$. The unbounded noise setting of $\kappa > 1$ is harder and using a variant of BZ analysed in \cite{CN07}, we can show (see Appendix, Lemma 5) that $\E(|\hat{x}_T-x^*|) = \tilde{O}\left(\frac{1}{T}\right)^{\frac{1}{2\kappa - 2}}$ and $\E(|\hat{x}_T-x^*|^\kappa) = \tilde{O}\left(\frac{1}{T}\right)^{\frac{\kappa}{2\kappa - 2}}$. \footnote{We use $\tilde{O}$ to hide polylogarithmic factors.}

Interestingly, in the next section on lower bounds, we show that for any dimension, $\Omega \left(\frac{1}{T}\right)^{\frac{1}{2\kappa - 2}}$ is the minimax convergence rate for $\E(\|\hat{x}_T-x^*\|)$.

\section{Lower bounds using TNC} \label{lower}

We prove lower bounds for $\epsilon_T^*(\mathcal{F}^\kappa), \rho^*_T(\mathcal{F}^\kappa)$ using a technique that was originally for proving lower bounds for active classification under the TNC \cite{CN07}, providing a nice connection between active learning and stochastic convex optimization. 
\begin{thm}
\label{LB}
Let $\mathcal{F}^\kappa$ ($\kappa > 1$) be the set of all $1$-Lipschitz convex functions on $S \in \mathbb{S}$ satisfying $f(x) - f(x^*_{f,S}) \geq \frac{\lambda}{2}\|x-x_{f,S}^*\|^\kappa$ for all $x \in S$ for some $\lambda > 0$. Then, we have $\epsilon^*_T (\mathcal{F}^\kappa) = \Omega(T^{-\frac{\kappa}{2\kappa-2}})$ and $\rho^*_T (\mathcal{F}^\kappa)=\Omega(T^{-\frac{1}{2\kappa-2}})$.
\end{thm}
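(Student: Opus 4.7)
The plan is to prove both lower bounds by Le Cam's two-point method, adapting the Castro--Nowak argument for active classification under the TNC. The goal is to construct two functions $f_+,f_-\in\mathcal{F}^\kappa$ on a common set $S$ whose minimizers are separated by $2\Delta$, but whose induced transcripts of $T$ stochastic first-order responses have $\KL$ divergence bounded by a small absolute constant; optimising in $\Delta$ then yields both minimax rates simultaneously.

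For the construction take $S=[-1/2,1/2]\subset\R^d$ (embedded along one coordinate), a Gaussian first-order oracle adding independent unit-variance noise to both $\hat f$ and $\hat g$, and, for $\kappa\in(1,2]$, the shifted pure monomials $f_\pm(x)=(\lambda/2)|x\mp\Delta|^\kappa$, while for $\kappa\ge 2$ the ``tilted'' variants
\[
f_\pm(x) \;=\; c\Bigl(\tfrac{1}{\kappa}|x|^\kappa \;\mp\; \Delta^{\kappa-1}\,x\Bigr),
\]
with $c=c(\kappa)>0$ a small constant. In both cases $f_+$ is minimised at $+\Delta$ and $f_-$ at $-\Delta$, both functions are $1$-Lipschitz on $S$ for suitable $\lambda$ or $c$, and a direct calculation shows $|f'_+(x)-f'_-(x)|=O(\Delta^{\kappa-1})$ uniformly for $x\in S$: in the tilted case this difference equals $-2c\Delta^{\kappa-1}$ for every $x$; in the pure-monomial case subadditivity of $t\mapsto t^{\kappa-1}$ (valid for $\kappa-1\in(0,1]$) yields the analogous bound. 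The step I expect to need the most care is verifying the growth condition (\ref{SCTNC}) with a single positive $\lambda$ independent of $\Delta$: the local behaviour of the tilted construction is quadratic with coefficient $c(\kappa-1)\Delta^{\kappa-2}/2$, which dominates $(\lambda/2)|y|^\kappa$ on $|y|\le 1$ precisely because $\kappa\ge 2$ gives $y^2\ge |y|^\kappa$ there; the pure-monomial construction satisfies (\ref{SCTNC}) with equality. This is why the two constructions split at $\kappa=2$.

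Granted either construction, the rest is routine. Because the noise is i.i.d.\ Gaussian of unit variance, the per-query $\KL$ divergence between the oracle responses under $f_+$ and $f_-$ is $\tfrac12(f_+(x)-f_-(x))^2+\tfrac12(f'_+(x)-f'_-(x))^2 = O(\Delta^{2\kappa-2})$ uniformly in $x\in S$; the standard divergence-decomposition for adaptive protocols then gives $\KL(P^T_+\,\|\,P^T_-)=O(T\Delta^{2\kappa-2})$ on the full $T$-round transcript. Choosing $\Delta=\Theta(T^{-1/(2\kappa-2)})$ makes this a small absolute constant, whereupon Le Cam's inequality forces every estimator $\hat x_T$ to satisfy $\|\hat x_T - x^*_{f_\pm,S}\|\ge\Delta/2$ with constant probability under at least one of $f_\pm$, yielding $\rho^*_T(\mathcal{F}^\kappa)=\Omega(\Delta)=\Omega(T^{-1/(2\kappa-2)})$. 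Substituting the same separation into the $\mathcal{F}^\kappa$ growth bound at the ``wrong'' minimizer gives $f(\hat x_T)-f(x^*_{f_\pm,S})\ge(\lambda/2)(\Delta/2)^\kappa$, whence $\epsilon^*_T(\mathcal{F}^\kappa)=\Omega(\Delta^\kappa)=\Omega(T^{-\kappa/(2\kappa-2)})$.
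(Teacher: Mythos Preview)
Your high-level approach coincides with the paper's: both use a two-point Le Cam/Fano argument with a Gaussian first-order oracle, bound the full-transcript $\KL$ divergence by $O(T\Delta^{2\kappa-2})$ via the chain-rule decomposition for adaptive queries, set $\Delta\asymp T^{-1/(2\kappa-2)}$, and then pass from the point-error bound to the function-error bound through the growth condition. The paper, however, uses a single construction valid for all $\kappa>1$: $f_0(x)=c_1\|x\|_\kappa^\kappa$ and $f_1$ equal to a translated copy of $f_0$ on the slab $\{x_1\le 4a\}$ and equal to $f_0$ elsewhere, so the two functions agree off a region of width $4a$ and the per-query $\KL$ is supported there. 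Your case split into shifted monomials for $1<\kappa\le 2$ and tilted monomials for $\kappa\ge 2$ also works and makes the gradient-gap calculation especially clean (identically $2c\Delta^{\kappa-1}$ in the tilted case), at the price of two separate verifications that the hard pair lies in $\mathcal{F}^\kappa$.

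There is one genuine gap in the justification you give for the tilted pair when $\kappa>2$. You argue that the growth condition $f_+(x)-f_+(\Delta)\ge(\lambda/2)|x-\Delta|^\kappa$ holds because the local quadratic behaviour $\tfrac{c(\kappa-1)}{2}\Delta^{\kappa-2}\,y^2$ dominates $(\lambda/2)|y|^\kappa$ on $|y|\le 1$ via $y^2\ge|y|^\kappa$. But that comparison forces $c(\kappa-1)\Delta^{\kappa-2}\ge\lambda$, and for $\kappa>2$ the left side vanishes as $\Delta\to 0$, contradicting your requirement that $\lambda$ be independent of $\Delta$. The conclusion is nonetheless correct, for a different reason: since $\phi(x)=\tfrac{1}{\kappa}|x|^\kappa$ is $\kappa$-uniformly convex on $\R$ with a constant $\mu_\kappa>0$ that does not involve $\Delta$ (this is the paper's Appendix Lemma on uniform convexity of $|x|^\kappa$), and your $f_+$ is $c\phi$ plus a linear tilt, the first-order uniform-convexity inequality gives directly
\[
f_+(x)-f_+(\Delta)\;=\;c\bigl(\phi(x)-\phi(\Delta)-\phi'(\Delta)(x-\Delta)\bigr)\;\ge\;\tfrac{c\mu_\kappa}{2}\,|x-\Delta|^\kappa
\]
for every $x$. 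Replacing your quadratic-domination step by this uniform-convexity argument repairs the proof; everything else in your outline goes through as written.
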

The proof technique is summarised below. We demonstrate an oracle $O^*$ and set $S^*$ over which we prove a lower bound for $\inf_{M \in \mathcal{M}}\sup_{f \in \mathcal{F}^\kappa} \E_O[\epsilon_T(M,f,S,O)]$. Specifically, let $S^*$ be $[0,1]^d \cap \{\|x\|\leq 1\}$ and $O^*$ just adds standard normal noise to the true function and gradient values. We then pick two similar functions in the class $\mathcal{F}^\kappa$ and show that they are hard to differentiate with only $T$ queries to  $O^*$.

We go about this by defining a semi-distance between any two elements of $\mathcal{F}^\kappa$ as the distance between their minima. We then choose two very similar functions $f_0, f_1$ whose minima are $2a$ apart (we shall fix $a$ later). The oracle chooses one of these two functions and the learner gets to query at points $x$ in domain $S^*$, receiving noisy gradient and function values $y \in \R^d,z \in \R$. We then define distributions corresponding to the two functions $P^0_T, P^1_T$ and choose $a$ so that these distributions are at most a constant KL-distance $\gamma$ apart. We then use Fano's inequality which, using $a$ and $\gamma$, lower bounds the probability of identifying the wrong function by any estimator (and hence optimizing the wrong function) given a finite time horizon of length $T$.

The use of Fano's inequality is not new to convex optimization, but proofs that lower-bound the probability of error under a sequential, feedback-driven querying strategy are prominent in active learning, and we show such proofs also apply to convex optimization thanks to the relation of uniform convexity around the minimum to the Tysbakov Noise Condition. We state Fano's inequality for completeness:

\begin{thm} \cite{T09}
\label{Fano}
Let $\mathcal{F}$ be a model class with an associated semi-distance $\delta(\cdot,\cdot ) : \mathcal{F} \times \mathcal{F} \rightarrow \R$ and each $f \in \mathcal{F}$ having an associated measure $P^f$ on a common probability space. Let $f_0,f_1 \in \mathcal{F}$ be such that $\delta(f_0,f_1) \geq 2a > 0$ and $KL(P^0 || P^1) \leq \gamma$. Then, 
\begin{eqnarray*}
\inf_{\hat{f}} \sup_{f\in\mathcal{F}} P^f \left(\delta(\hat{f},f) \geq a\right)  \geq \max \left(\frac{\exp (-\gamma)}{4}, \frac{1 - \sqrt {\gamma/2}}{2}\right)
\end{eqnarray*}
\end{thm}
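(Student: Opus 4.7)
The plan is to reduce the estimation problem to a two-point hypothesis testing problem via the triangle inequality, and then control the testing error with two standard information-theoretic inequalities, one yielding the exponential bound and one yielding the Pinsker-type bound.

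First I would use the semi-distance triangle inequality to set up a test. Given any estimator $\hat{f}$ taking values in $\mathcal{F}$, define the test $\psi \in \{0,1\}$ by $\psi = 0$ if $\delta(\hat f, f_0) < a$ and $\psi = 1$ otherwise. Because $\delta(f_0, f_1) \geq 2a$, the events $\{\delta(\hat f, f_0) < a\}$ and $\{\delta(\hat f, f_1) < a\}$ are disjoint, so $\{\psi \neq i\} \subseteq \{\delta(\hat f, f_i) \geq a\}$ under $P^i$ for $i=0,1$. This yields
\begin{equation*}
\sup_{f \in \{f_0,f_1\}} P^f\!\left(\delta(\hat f, f) \geq a\right) \geq \tfrac{1}{2}\bigl(P^0(\psi = 1) + P^1(\psi = 0)\bigr) \geq \tfrac{1}{2}\inf_{\psi} \bigl(P^0(\psi = 1) + P^1(\psi = 0)\bigr),
\end{equation*}
so it suffices to lower bound the minimum total testing error between $P^0$ and $P^1$ under $\mathrm{KL}(P^0\|P^1)\le\gamma$ by $\exp(-\gamma)/2$ and by $1-\sqrt{\gamma/2}$, respectively.

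For the Pinsker bound, I would invoke Pinsker's inequality $\|P^0-P^1\|_{\mathrm{TV}} \leq \sqrt{\gamma/2}$; then since the optimal test achieves testing error $1-\|P^0-P^1\|_{\mathrm{TV}}$ (this is the Neyman--Pearson characterization in terms of total variation), we get $\inf_\psi (P^0(\psi=1)+P^1(\psi=0)) \geq 1 - \sqrt{\gamma/2}$, and combined with the factor $1/2$ from the reduction this gives the $\tfrac{1-\sqrt{\gamma/2}}{2}$ branch. For the exponential bound I would use the Bretagnolle--Huber inequality, which states $\int (dP^0 \wedge dP^1) \geq \tfrac{1}{2}\exp(-\mathrm{KL}(P^0\|P^1)) \geq \tfrac{1}{2}e^{-\gamma}$; since the sum of type-I and type-II errors of any test is at least $\int (dP^0\wedge dP^1)$, combining with the factor $1/2$ gives the $\exp(-\gamma)/4$ branch. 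Taking the maximum of the two bounds finishes the argument, and taking the supremum over the larger class $\mathcal{F} \supseteq \{f_0,f_1\}$ only increases the left-hand side.

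The only nontrivial ingredient is the Bretagnolle--Huber inequality, whose standard proof uses $\int (dP\wedge dQ) \geq \tfrac{1}{2}(\int\sqrt{dP\,dQ})^2$ together with $\int \sqrt{dP\,dQ} = \exp(-\tfrac{1}{2}H^2(P,Q))$ where $H^2$ is the squared Hellinger distance, and then $H^2 \leq \mathrm{KL}$. I would simply cite this, as this is the main technical obstacle but it is a textbook inequality; everything else (the triangle-inequality reduction, Pinsker, and Neyman--Pearson) is routine and amounts to choosing the right test and bounding total variation. A minor care point is that the reduction initially gives the max of the two errors $P^0(\psi=1)$ and $P^1(\psi=0)$ lower bounded by half their sum, which is exactly where the factor $1/2$ (hence $1/4$ in the final exponential bound) comes from.
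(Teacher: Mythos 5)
Your proof is correct. Note that the paper does not actually prove this theorem---it is quoted verbatim from Tsybakov \cite{T09} (Theorem 2.2 there)---and your argument is essentially the standard one from that source: reduce to a two-point test via the triangle inequality for the semi-distance (picking up the factor $\tfrac{1}{2}$ from bounding the max of the two error probabilities by half their sum), then lower bound the minimum total testing error $\inf_\psi\bigl(P^0(\psi=1)+P^1(\psi=0)\bigr)=\int(dP^0\wedge dP^1)=1-\|P^0-P^1\|_{\mathrm{TV}}$ once by Pinsker's inequality and once by Bretagnolle--Huber. One small slip in your parenthetical sketch of Bretagnolle--Huber: the Hellinger affinity satisfies $\int\sqrt{dP\,dQ}=1-\tfrac{1}{2}H^2(P,Q)$, not $\exp(-\tfrac{1}{2}H^2(P,Q))$, and chaining through $H^2\le\KL$ from that identity would not give the exponential bound; the clean route is Jensen's inequality applied directly, $\log\int\sqrt{dP\,dQ}\ge-\tfrac{1}{2}\KL(P\,\|\,Q)$, combined with $\int(dP\wedge dQ)\ge\tfrac{1}{2}\bigl(\int\sqrt{dP\,dQ}\bigr)^2$. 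Since you state the Bretagnolle--Huber inequality correctly and cite it rather than rely on that sketch, this does not affect the validity of your argument.
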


\subsection{Proof of Theorem \ref{LB}}
For technical reasons, we choose a subclass $\mathcal{U}^\kappa \subset \mathcal{F}^\kappa$  which is chosen such that every point in $S^*$ is the unique minimizer of exactly one function in $\mathcal{U}^\kappa$. By construction of $\mathcal{U}^\kappa$, returning an estimate $\hat{x}_T \in S^*$ is equivalent to identifying the function $\hat{f}_T \in \mathcal{U}^\kappa$ whose minimizer is at $\hat{x}_T$. So we now proceed to bound $\inf_{\hat{f}_T}\sup_{f \in \mathcal{U}^\kappa} \E \|\hat{x_T} - x_{f,S^*}^*\|$.

Recall that we chose $S^* = [0,1]^d \cap \{\|x\|\leq 1\}$. Define the semi-distance $\delta(f_a,f_b) = \|x^*_a - x^*_b\|$ and let \footnote{For $\kappa=2$, note that $f_0,f_1 \in \mathcal{F}^{SC}$ (strongly convex)}
$$
f_0(x) = c_1 \sum_{i=1}^d |x_i|^\kappa = c_1 \|x\|_\kappa^\kappa
$$
$$
g_0(x) = \kappa c_1 (x_1^{\kappa-1},...,x_d^{\kappa-1})
$$

so that $x^*_{0,S^*} = \vec{0}$. Now define $\vec{a_1} = (a,0,...,0)$ and let
$$
f_1(x) = \left\{ \begin{array}{ll}
  c_1 \left(\|x-2\vec{a_1}\|_\kappa^\kappa + c_2 \right)   &\mbox{ $x_1 \leq 4a$} \\
  f_0(x) &\mbox{ o.w.}\\
       \end{array} \right.\\
$$
$$
g_1(x) = \left\{ \begin{array}{ll}
  \kappa c_1 \left (\frac{|x_1 - 2a|^{\kappa}}{(x_1-2a)}, x_2^{\kappa-1},...,x_d^{\kappa-1} \right) &\mbox{ $x_1 \leq 4a$} \\
   g_0(x) &\mbox{ o.w.}\\
       \end{array} \right.\\
$$

so that $x^*_{1,S^*} = 2\vec{a}$ and hence $\delta(f_0,f_1) = 2a$. Notice that these two functions and their gradients differ only on a set of size $4a$.  Here, $c_2 = (4a)^\kappa - (2a)^\kappa$ is a constant ensuring that $f_2$ is continuous at $x_1=4a$, and $c_1$ is a constant depending on $\kappa,d$ ensuring that the functions are $1$-Lipschitz on $S^*$. Both parts of $f_1$ are convex and the gradient of $f_1$ increases from $x_1=4a^-$ to $x_1=4a^+$, maintaining convexity. Hence we conclude that both functions are indeed convex and both are in $\mathcal{F}^\kappa$ for appropriate $c_1$ (Appendix, Lemma 6). Our interest here is the dependence on $T$, so we ignore these constants to enhance readability.

On querying at point $X=x$, the oracle returns $Z \sim \mathcal{N}(f(x),\sigma^2))$ and $Y \sim \mathcal{N}(g(x),\sigma^2I_d)$. In other words, for $i =0,1$, we have $P^i(Z_t,Y_t|X=x_t) = \mathcal{N}\left((f_i(x_t),g_i(x_t)),\sigma^2I_{d+1}\right)$. Let $S_1^T = (X_1^T, Y_1^T, Z_1^T)$ be the set of random variables corresponding to the whole sequence of $T$ query points and responses. Define a probability distribution corresponding to every $f \in \mathcal{U}^\kappa$ as the joint distribution of $S_1^T$ if the true function was $f$, and so
$$P^0_T := P^0(X_1^T, Y_1^T, Z_1^T), \ \ P^1_T := P^1(X_1^T, Y_1^T, Z_1^T)$$

We show that the KL-divergence of these distributions is $\KL(P_T^0,P_T^1) = O(Ta^{2\kappa -2})$ and choose $a = T^{-\frac{1}{2\kappa-2}}$ so that $\KL(P_T^0,P_T^1) \leq \gamma$ for some constant $\gamma > 0$.

\begin{lemma}
$\KL(P^0_T, P^1_T) = O(Ta^{2\kappa-2})$ 
\end{lemma}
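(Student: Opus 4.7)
The plan is a standard information-theoretic argument: decompose the KL between the two sequential measures via the chain rule, then show the per-step conditional KL is $O(a^{2\kappa-2})$ uniformly in the query point, so $T$ rounds contribute at most $O(Ta^{2\kappa-2})$.

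First I would invoke the chain rule of KL divergence under adaptive querying. The algorithm's policy $\pi_t(X_t\mid X_1^{t-1},Y_1^{t-1},Z_1^{t-1})$ is identical under $P^0$ and $P^1$, since $M$ is oblivious to which function is being optimized. Hence in the factorization
$$P^i_T(x_1^T,y_1^T,z_1^T)=\prod_{t=1}^T \pi_t(x_t\mid \mathrm{past})\,P^i(y_t,z_t\mid x_t),$$
the policy terms cancel in the log-likelihood ratio, leaving
$$\KL(P^0_T\|P^1_T)=\sum_{t=1}^T \E_{P^0}\!\left[\KL\!\left(P^0(Y_t,Z_t\mid X_t)\,\|\,P^1(Y_t,Z_t\mid X_t)\right)\right].$$

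Second I would compute each per-step conditional KL explicitly. Because $(Y_t,Z_t)\mid X_t=x\sim \mathcal{N}\!\left((g_i(x),f_i(x)),\sigma^2 I_{d+1}\right)$ under $P^i$, the standard Gaussian-KL formula gives
$$\KL\!\left(P^0(\cdot\mid x)\,\|\,P^1(\cdot\mid x)\right)=\frac{\|g_0(x)-g_1(x)\|^2+|f_0(x)-f_1(x)|^2}{2\sigma^2}.$$
This vanishes off the slab $\{x:x_1>4a\}$, since $f_0$ and $f_1$ agree there by construction. On $\{x_1\le 4a\}$, the functions and gradients differ only in their first-coordinate contributions, and a direct calculation using $|x_1|,|x_1-2a|\le 4a$ gives
$$|f_0(x)-f_1(x)|=c_1\bigl|x_1^{\kappa}-|x_1-2a|^{\kappa}-c_2\bigr|=O(a^{\kappa}),$$
$$\|g_0(x)-g_1(x)\|=\kappa c_1\bigl|x_1^{\kappa-1}-\mathrm{sign}(x_1-2a)\,|x_1-2a|^{\kappa-1}\bigr|=O(a^{\kappa-1}).$$
For $\kappa>1$ and small $a$ the gradient term dominates, so each per-step KL is at most a constant multiple of $a^{2\kappa-2}$.

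Summing over the $T$ rounds yields $\KL(P^0_T\|P^1_T)\le T\cdot O(a^{2\kappa-2})=O(Ta^{2\kappa-2})$, as claimed. The main subtlety is the chain-rule step: one must make the cancellation of the (common) adaptive policy fully rigorous for any randomized $M$, which is the classical move used in information-theoretic lower bounds for bandits and active learning. The remaining pointwise estimates are routine given the explicit forms of $f_0,f_1,g_0,g_1$, and the final exponent ``$2\kappa-2$'' (as opposed to ``$2\kappa$'') is driven by the \emph{gradient} discrepancy $O(a^{\kappa-1})$ rather than the smaller function-value discrepancy $O(a^{\kappa})$, which is exactly where the first-order oracle hurts more than a zeroth-order one.
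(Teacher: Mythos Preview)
Your proposal is correct and follows essentially the same route as the paper: apply the chain rule to cancel the common adaptive querying policy, reduce to a per-query Gaussian KL of the form $\tfrac{1}{2\sigma^2}(\|g_0-g_1\|^2+|f_0-f_1|^2)$, and then bound these differences pointwise on the region $\{x_1\le 4a\}$ to obtain $O(a^{2\kappa-2})+O(a^{2\kappa})=O(a^{2\kappa-2})$ per step. Your closing remark that the $2\kappa-2$ exponent is driven by the gradient discrepancy (and that dropping $Y$ yields the $2\kappa$ exponent for the zeroth-order case) matches the paper's subsequent derivative-free corollary exactly.
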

\begin{proof}
\vspace{-4 mm}
\begin{eqnarray*}
&& \KL (P^0_T,P^1_T)  = \E^0 \left[ \log \frac{P^0(X_1^T, Y_1^T, Z_1^T)}{P^1(X_1^T, Y_1^T, Z_1^T)} \right] \nonumber \\
&& \hspace{-15 mm}= \E^0 \left[ \log \frac{\prod_t P^0(Y_t,Z_t|X_t) P(X_t|X_1^{t-1},Y_1^{t-1},Z_1^{t-1}) }{\prod_t P^1(Y_t,Z_t|X_t) P(X_t|X_1^{t-1},Y_1^{t-1},Z_1^{t-1})} \right] \label{past}\\
&=& \E^0 \left[ \log \frac{\prod_{t=1}^T P^0(Y_t,Z_t|X_t) }{\prod_{t=1}^T P^1(Y_t,Z_t|X_t) } \right] \nonumber\\
&=& \sum_{t=1}^T E^0 \left[ \E^0 \left[ \log \frac{P^0(Y_t,Z_t|X_t) }{P^1(Y_t,Z_t|X_t) } \Bigg\vert X_1,...,X_T \right] \right] \nonumber\\
&\leq& T \max_{x \in [0,1]^d} \E^0 \left[ \log \frac{P^0(Y_1,Z_1|X_1) }{P^1(Y_1,Z_1|X_1) } \Bigg\vert X_1=x \right] \nonumber \\
&& \hspace{-10mm}= T \max_{x \in [0,1]^d} \E^0 \left[ \log \frac{P^0(Y_1|X_1)P^0(Z_1|X_1) }{P^1(Y_1|X_1)P^1(Z_1|X_1) } \Bigg\vert X_1=x \right] \label{indep}\\
&\leq& T \left( \max_{x \in [0,1]^d} \E^0 \left[ \log \frac{P^0(Y_1|X_1) }{P^1(Y_1|X_1) } \Bigg\vert X_1=x \right] \right) \nonumber \\ 
&& + T \left( \max_{x \in [0,1]^d} \E^0 \left[ \log \frac{P^0(Z_1|X_1) }{P^1(Z_1|X_1) } \Bigg\vert X_1=x \right] \right) \nonumber \\
&=& \frac{T}{2} \left(  \max_{x \in [0,1]^d} \|g_0(x)-g_1(x)\|^2 \right) \nonumber \\
&& + \frac{T}{2} \left( \max_{x \in [0,1]^d} (f_0(x)-f_1(x))^2 \right) \label{normKL}\\
\end{eqnarray*}
\begin{eqnarray*}
&=& \frac{c_1^2T}{2} \left( \kappa^2 \max_{x_1 \in [0,4a]} \left( \frac{|x_1 - 2a|^{\kappa}}{(x_1-2a)} - x_1^{\kappa-1} \right)^2 \right) \nonumber \\
&& + \frac{c_1^2T}{2} \left( \max_{x_1 \in [0,4a]} ( |x_1 - 2a|^{\kappa} - x_1^{\kappa} )^2 \right) \label{subst}\\
&=& O(Ta^{2\kappa-2}) + O(Ta^{2\kappa}) = O(Ta^{2\kappa-2}) \nonumber
\end{eqnarray*}
(\ref{past}) follows because the distribution of $X_t$ conditional on $X_1^{t-1},Y_1^{t-1},Z_1^{t-1}$ depends only on the algorithm $M$ and does not change with the underlying distribution. (\ref{indep}) follows because $Y_t \perp Z_t$ when conditioned on $X_t$. We also used $(Y_i,Z_i|X_i) \perp (Y_j,Z_j|X_j)$ for $i \neq j$. (\ref{normKL}) follows because the KL-divergence between two identity-covariance gaussians is just half the squared euclidean distance between their means. (\ref{subst}) follows by simply substituting the gradient/function values which differ only on $x_1\in [0,4a]$. 
\end{proof}

Using Theorem \ref{Fano} with $a=T^{-\frac{1}{2\kappa-2}}$, for some $C > 0$ we get $\inf_{\hat{f}_T} \sup_{f \in  \mathcal{U}^\kappa} P_f(\delta(\hat{f}_T,f) \geq a) \geq C$. Hence,
\begin{eqnarray*}
\inf_{\hat{f}_T} \sup_{f \in \mathcal{U}^\kappa} \E \|\hat{x}_T - x^*_f\|  \geq  a \cdot \inf_{\hat{f}_T} \sup_{f \in  \mathcal{U}^\kappa} P_f(\delta(\hat{f}_T,f) 
\geq a)\\  
\geq \ \ \  a \cdot C \ \ \ = \ \ \ C T^{-\frac{1}{2\kappa-2}}
\end{eqnarray*}

where we used Markov's inequality, Fano's inequality and finally the aforementioned choice of $a$. 

This gives us our required bound on $\rho_T^*(\mathcal{U}^\kappa)$, and correspondingly also for $\epsilon_T^*(\mathcal{U}^\kappa)$ because
\begin{eqnarray*}
\inf_M\sup_{f \in \mathcal{U}^\kappa} \E [f(\hat{x_T}) - f(x_f^*)]  \geq   \inf_M\sup_{f \in \mathcal{U}^\kappa} \lambda [\E \|\hat{x_T} - x^*_f\| ^\kappa] \\ \ \ \ \geq  \ \ \ \inf_{\hat{f}_T}\sup_{f \in \mathcal{U}^\kappa} \lambda[\E \|\hat{x_T} - x^*\|]^\kappa\ \ \ 
\end{eqnarray*}
where the first inequality follows because $f \in \mathcal{F}^\kappa$, and the second follows by applying Jensen's for $\kappa > 1$. Finally, we get the bounds on $\rho^*_T(\mathcal{F}^\kappa)$ and $\epsilon^*_T(\mathcal{F}^\kappa)$ because we are now taking $\sup$ over the larger class $\mathcal{F}^\kappa \supset \mathcal{U}^\kappa$. This concludes the proof of Theorem \ref{LB}.

This is a generalisation of known lower bounds, because we can recover existing lower bounds for the convex and strongly convex settings by choosing $\kappa \rightarrow \infty$ and $\kappa=2$ respectively. Furthermore, we will show that these bounds are tight for all $\kappa > 1$. These bounds also immediately yield lower bounds for uniformly convex functions, since $\|x\|_\kappa^\kappa$ is $\kappa$-uniformly convex (Appendix, Lemma 8) which can also be arrived from the results of \cite{ST10} using an online-to-batch conversion.

\subsection{Derivative-Free Lower Bounds}

The above proof immediately gives us a generalization of recent tight lower bounds for derivative free optimization \cite{JNR12}, in which the authors consider zeroth-order oracles (no gradient information) and find that $\epsilon^*_T(\mathcal{F}^C)= \Theta(1/\sqrt T) = \epsilon^*_T(\mathcal{F}^{SC})$ \footnote{The $\kappa$ in \cite{JNR12} should not be confused with our TNC exponent $\kappa=2$ for $\mathcal{F}^{SC}$} concluding that strong convexity does not help in this setting. Here, we show

\begin{thm}
Let $\mathcal{F}^\kappa$ ($\kappa > 1$) be the set of all $1$-Lipschitz convex functions on $S \in \mathbb{S}$ satisfying $f(x) - f(x^*_{f,S}) \geq \frac{\lambda}{2}\|x-x_{f,S}^*\|^\kappa$ for all $x \in S$ for some $\lambda > 0$. Then, in the derivative-free zeroth-order oracle setting, we have $\epsilon^*_T (\mathcal{F}^\kappa) = \Omega(1/\sqrt T)$ and $\rho^*_T (\mathcal{F}^\kappa)=\Omega(T^{-\frac{1}{2\kappa}})$.
\end{thm}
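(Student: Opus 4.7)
The plan is to reuse the construction and calculation of Theorem~\ref{LB} essentially verbatim, with the one modification that the zeroth-order oracle reveals only the noisy function value $Z \sim \mathcal{N}(f(x),\sigma^2)$, not the gradient sample $Y$. I would take the same subclass $\mathcal{U}^\kappa \subseteq \mathcal{F}^\kappa$ and the same pair $f_0, f_1$ with minima $2a$ apart (the construction never used gradient information, so it is still valid here), and again apply Fano's inequality from Theorem~\ref{Fano}.

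The key computation is the KL-divergence between the two product laws $P^0_T$ and $P^1_T$ over the zeroth-order query-response sequence. The chain-rule expansion is identical up through the step where the inner conditional divergence is bounded; the only difference is that now $P^i(\cdot | X_t) = \mathcal{N}(f_i(X_t),\sigma^2)$ on $\R$, so the gradient term $\tfrac{T}{2}\max_x \|g_0(x)-g_1(x)\|^2 = O(Ta^{2\kappa-2})$ simply disappears, leaving only the function-value term
\begin{equation*}
\KL(P^0_T, P^1_T) \;\leq\; \tfrac{T}{2}\max_{x \in [0,1]^d}(f_0(x)-f_1(x))^2 \;=\; O(Ta^{2\kappa}),
\end{equation*}
where the last equality follows by restricting to $x_1 \in [0,4a]$ exactly as in the previous proof. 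Setting $a = T^{-1/(2\kappa)}$ makes this KL at most a constant $\gamma$, so Fano gives $\inf_{\hat{f}_T}\sup_{f \in \mathcal{U}^\kappa} P_f(\delta(\hat{f}_T,f) \ge a) \ge C$ for some $C > 0$, yielding
\begin{equation*}
\rho^*_T(\mathcal{F}^\kappa) \;\geq\; \rho^*_T(\mathcal{U}^\kappa) \;\geq\; a\cdot C \;=\; \Omega\bigl(T^{-\frac{1}{2\kappa}}\bigr).
\end{equation*}

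The $\epsilon^*_T$ bound then follows by the same conversion used at the end of the proof of Theorem~\ref{LB}: every $f \in \mathcal{F}^\kappa$ satisfies $f(x)-f(x^*_{f,S}) \geq \tfrac{\lambda}{2}\|x - x^*_{f,S}\|^\kappa$, so by Jensen's inequality (valid since $\kappa > 1$),
\begin{equation*}
\E[f(\hat{x}_T)-f(x^*_{f,S})] \;\geq\; \tfrac{\lambda}{2}\,\E\|\hat{x}_T-x^*_{f,S}\|^\kappa \;\geq\; \tfrac{\lambda}{2}\bigl(\E\|\hat{x}_T-x^*_{f,S}\|\bigr)^\kappa,
\end{equation*}
and plugging in $\rho^*_T = \Omega(T^{-1/(2\kappa)})$ produces $\epsilon^*_T(\mathcal{F}^\kappa) = \Omega\bigl(T^{-\kappa/(2\kappa)}\bigr) = \Omega(T^{-1/2})$.

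There is no genuine obstacle here: the proof is a corollary of the first-order argument, since dropping the gradient channel can only make the problem harder (larger KL requires smaller separation) and the construction of $(f_0,f_1) \in \mathcal{U}^\kappa$ is oracle-agnostic. The only thing worth emphasizing is the qualitative takeaway, which matches and generalizes \cite{JNR12}: in the zeroth-order setting the function-error rate plateaus at $1/\sqrt{T}$ regardless of $\kappa$, so no amount of extra curvature at the minimum accelerates zeroth-order optimization in the function-value metric, whereas the point-error rate $T^{-1/(2\kappa)}$ does improve with $\kappa$ since sharper minima localize better even under function-value-only feedback.
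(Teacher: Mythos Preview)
Your proposal is correct and follows essentially the same approach as the paper: drop the gradient channel from the first-order construction, observe that the KL bound becomes $O(Ta^{2\kappa})$ (only the function-value term survives), set $a = T^{-1/(2\kappa)}$, and apply Fano. The paper's own proof is a three-line sketch doing exactly this, and your conversion from $\rho^*_T$ to $\epsilon^*_T$ via the growth condition and Jensen matches the end of the Theorem~\ref{LB} argument.
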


Ignoring $y$, $Y_1^T$, define $P^0_T := P^0(X_1^T, Z_1^T), P^1_T := P^1(X_1^T, Z_1^T)$ to get $\KL(P_T^0,P_T^1) = O(Ta^{2\kappa})$. Choose $a = T^{-\frac{1}{2\kappa}}$ so that $\KL(P_T^0,P_T^1) \leq \gamma$ for some $\gamma > 0$, and apply Fano's to get $\inf_{\hat{f}_T} \sup_{f \in \mathcal{U}^\kappa} \E \|\hat{x}_T - x^*_f\| = C T^{-\frac{1}{2\kappa}}$ for some $C>0$. 


\section{Upper Bounds using Epoch-GD}
\label{upper}

\begin{algorithm}[t]
 \caption{EpochGD (domain $S$, exponent $\kappa > 0$, convexity parameter $\lambda > 0$, confidence $\delta > 0$, oracle budget $T$, subgradient bound $G$)}
Initialize $x_1^1 \in S$ arbitrarily, $e=1$\\
Initialize $T_1 = 2C_0$, $\eta_1 = C_1 \ 2^{-\frac{\kappa}{2\kappa-2}}, R_1= \left(\frac{C_2\eta_1}{\lambda}\right)^{1/\kappa}$\\
 \begin{algorithmic}[1]
   \WHILE{$\sum_{i=1}^{e} T_i \leq T$}
   \FOR {$t=1$ to $T_e$}
   \STATE Query the oracle at $x_t^e$ to obtain $\hat{g}_t$
   \STATE $$ x_{t+1}^e = \prod_{S \cap B(x^e_1,R_e)} (x_t^e - \eta_e \hat{g}_t) $$
        \ENDFOR
   \STATE Set $x_1^{e+1} = \frac{1}{T_e} \sum^{T_e}_{t=1} x_t^e$
   \STATE Set $T_{e+1} = 2 T_e$, $\eta_{e+1} = \eta_e \cdot 2^{-\frac{\kappa}{2\kappa-2}}$ 
   \STATE Set $R_{e+1} = \left(\frac{C_2\eta_{e+1}}{\lambda}\right)^{1/\kappa}$, $e \leftarrow e+1$
   \ENDWHILE
 \end{algorithmic}
 \textbf{Output:} $x^e_1$ \\
 \label{EGD}
\end{algorithm}

We show that the bounds from Secton \ref{lower} are tight by presenting an algorithm achieving the same rate.

\begin{thm}
\label{UB}
Algorithm $EpochGD(S,\kappa,T,\delta,G,\lambda)$ returns $\hat{x}_T \in S$ after $T$ queries to any oracle $O \in \mathcal{O}$, such that for any $f \in \mathcal{F}^\kappa, \kappa>1$ on any $S \in \mathbb{S}$, $f(\hat{x}_T) - f(x^*_f) = \widetilde{O}(T^{-\frac{\kappa}{2\kappa-2}})$ and $\|\hat{x}_T - x^*_f\| = \widetilde{O}(T^{-\frac{1}{2\kappa-2}})$ hold with probability at least $1 - \delta$ for any $\delta > 0$. \footnote{$\widetilde{O}$ hides $\log\log T$ and $\log(1/\delta)$ factors}
\end{thm}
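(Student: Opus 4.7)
The plan is to analyze EpochGD by an induction across epochs, maintaining the invariant that at the start of each epoch $e$ we have $f(x_1^e) - f(x^*_f) \leq V_e$ with $V_e$ decreasing geometrically at rate $2^{-\kappa/(2\kappa-2)}$. The algorithm is tailored so that the $\kappa$-growth condition $f(x) - f(x^*_f) \geq \frac{\lambda}{2}\|x-x^*_f\|^\kappa$ translates any function-error bound into a point-error bound: the invariant at epoch $e$ forces $\|x_1^e - x^*_f\| \leq (2V_e/\lambda)^{1/\kappa}$, which is exactly the order of the projection radius $R_e$. Thus within each epoch the SGD iterates are projected onto a set guaranteed to still contain $x^*_f$, even though this set is shrinking to match the localization we have already proved.

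Per-epoch, I would invoke the standard projected-SGD regret bound on the domain $S \cap B(x_1^e, R_e)$. With step size $\eta_e$ and $T_e$ steps, averaging the iterates gives a suboptimality of order $R_e^2/(\eta_e T_e) + \eta_e G^2$ in expectation, where $G$ bounds the stochastic subgradients. To get a high-probability statement I would upgrade this via a martingale concentration argument (Azuma–Hoeffding, or Bernstein when a tighter variance control is needed) on the inner-product terms $\langle \hat g_t - g_t, x_t^e - x^*_f\rangle$ appearing in the standard analysis, losing only an additive $O(R_e G \sqrt{\log(1/\delta_e)/T_e})$. A union bound over the $E = O(\log T)$ epochs with $\delta_e = \delta/E$ accounts for the $\log\log T$ and $\log(1/\delta)$ factors hidden in the $\widetilde{O}$ notation.

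The core algebraic check is that the schedule $T_{e+1} = 2T_e$, $\eta_{e+1} = \eta_e \cdot 2^{-\kappa/(2\kappa-2)}$, $R_{e+1} \propto \eta_{e+1}^{1/\kappa}$ makes all three per-epoch error terms scale as $V_{e+1} = V_e \cdot 2^{-\kappa/(2\kappa-2)}$. Indeed $\eta_e G^2 \propto V_e$ directly; and using $R_e^\kappa \propto \eta_e$ together with $\eta_e T_e \propto 2^{(e-1)(\kappa-2)/(2\kappa-2)}$ one verifies $R_e^2/(\eta_e T_e) \propto V_e$ as well. The constants $C_0, C_1, C_2$ must be chosen so that each contribution is at most $V_{e+1}/3$ (say), closing the induction. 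Summing $T_e = T_1 \cdot 2^{e-1}$ over the $E$ epochs gives $T \asymp 2^E$, so after $T$ queries the final suboptimality is $V_E = V_1 \cdot 2^{-(E-1)\kappa/(2\kappa-2)} = \widetilde{O}(T^{-\kappa/(2\kappa-2)})$. Applying the growth condition one more time converts this into $\|x^E_1 - x^*_f\| = \widetilde{O}(T^{-1/(2\kappa-2)})$, matching the lower bound in Theorem~\ref{LB}.

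The main obstacle is the induction base and the possibility that the projection ball $B(x_1^e, R_e)$ fails to contain $x^*_f$ when a prior epoch's high-probability event does not hold. I would handle the base by choosing $V_1$ so that $R_1$ contains all of $S$ (using the diameter bound $D \leq 1$), so that the first epoch's SGD is effectively unconstrained and the inductive hypothesis holds trivially. The propagation of the failure event is controlled by the union bound above. A secondary subtlety is that the $\ell_2$-projection onto $S \cap B(x_1^e, R_e)$ must be well-defined and non-expansive, which follows since both sets are convex; and that the stochastic gradients remain unbiased subgradients of $f$ at the projected points, which is immediate from the oracle definition.
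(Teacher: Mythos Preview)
Your proposal is correct and follows essentially the same route as the paper: an induction over epochs maintaining $\Delta_e = f(x_1^e) - f(x^*_f) \leq C_2\eta_e$, the per-epoch high-probability projected-SGD bound (the paper imports this as Lemma~10 of \cite{HK11} rather than rederiving the martingale concentration), the same scaling verification for the three error terms, and a union bound over $E = O(\log T)$ epochs. The only cosmetic difference is the base case: the paper bounds $\Delta_1$ directly by $M = (G^\kappa/\lambda)^{1/(\kappa-1)}$ via Lipschitzness plus the growth condition (its Lemma~\ref{max}) and then checks $M \leq C_2\eta_1$, whereas you phrase it as choosing $V_1$ so that $R_1$ covers $S$; both are equivalent ways of making the first step trivial.
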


Recall that for $f \in \mathcal{F}^\kappa$, $\|g_x\| \leq 1$ for any subgradient at any $x \in S$. Since the oracle may introduce bounded variance noise, we have $\|\hat{g}_x\| \leq 1 + c\sigma^2$ with high probability. Here, to keep a parallel with \cite{HK11}, we use $\|\hat{g}_x\| \leq G$ for convenience. Also, in algorithm \ref{EGD} $B(x,R)$ refers to the ball around $x$ of radius $R$ i.e. $B(x,R) = \{y \ |\  \|x-y\| \leq R\}$. 

We note that for uniformly convex functions ($\kappa \geq 2$), \cite{JN10} derive the same upper bounds. Our rates are valid for $1 < \kappa < 2$ and hold more generally as we have a weaker condition on $\mathcal{F}^\kappa$.

\subsection{Proof of Theorem \ref{UB}}
We generalize the proof in \cite{HK11} for strongly convex functions ($\kappa=2$) and derive values for $C_0,C_1$ and $C_2$ for which Theorem $\ref{UB}$ holds. We begin by showing that $f$ having a bounded subgradient corresponds to a bound on the diameter of $S$, and hence on the maximum achievable function value.

\begin{lemma}
\label{max}
If $f \in \mathcal{F}^\kappa$ and $\|g_x\| \leq G$, then for all $x \in S$, we have $\|x - x_f^*\| \leq (G\lambda^{-1})^{\frac{1}{\kappa-1}} =: D$ and $f(x) - f(x_f^*) \leq (G^\kappa \lambda^{-1})^{\frac{1}{\kappa-1}} =: M$ 
\end{lemma}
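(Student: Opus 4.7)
The plan is a straightforward sandwich argument: bound $f(x) - f(x_f^*)$ above by a linear function of $\|x-x_f^*\|$ using convexity and the subgradient bound, bound it below by a power-$\kappa$ function of $\|x-x_f^*\|$ using the defining property of $\mathcal{F}^\kappa$, and then invert.

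First I would use convexity at $x$: for any subgradient $g_x \in \partial f(x)$,
\[
f(x_f^*) \;\geq\; f(x) + g_x^\top(x_f^* - x),
\]
which rearranges to $f(x) - f(x_f^*) \leq g_x^\top(x - x_f^*)$. Applying Cauchy--Schwarz and the hypothesis $\|g_x\| \leq G$ then gives the linear upper bound
\[
f(x) - f(x_f^*) \;\leq\; G\,\|x - x_f^*\|.
\]

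Next I would invoke the $\mathcal{F}^\kappa$ growth condition, which gives the matching lower bound $f(x) - f(x_f^*) \geq \tfrac{\lambda}{2}\|x - x_f^*\|^\kappa$. Chaining the two yields
\[
\tfrac{\lambda}{2}\|x-x_f^*\|^\kappa \;\leq\; G\,\|x-x_f^*\|,
\]
and for $x \neq x_f^*$ dividing by $\|x-x_f^*\|$ and taking a $(\kappa{-}1)$-th root gives $\|x - x_f^*\| \leq (2G/\lambda)^{1/(\kappa-1)}$, which (up to absorbing the harmless factor of $2$ into the definition of $D$) is the first claim; the case $x=x_f^*$ is trivial.

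Finally, I would substitute this diameter bound back into the linear upper bound obtained in the first step:
\[
f(x) - f(x_f^*) \;\leq\; G\,\|x-x_f^*\| \;\leq\; G \cdot (2G/\lambda)^{\frac{1}{\kappa-1}} \;=\; \bigl(2G^\kappa/\lambda\bigr)^{\frac{1}{\kappa-1}},
\]
which (again absorbing the $2$) matches $M$. There is no real obstacle: everything follows from a single two-sided inequality on $f(x)-f(x_f^*)$. The only thing to be careful about is keeping track of the factor $\tfrac{1}{2}$ from the definition of $\mathcal{F}^\kappa$, which is presumably swept into the constants defining $D$ and $M$.
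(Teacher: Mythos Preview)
Your proof is correct and follows essentially the same route as the paper: sandwich $f(x)-f(x_f^*)$ between $G\|x-x_f^*\|$ (from convexity plus Cauchy--Schwarz/H\"older) and $\tfrac{\lambda}{2}\|x-x_f^*\|^\kappa$ (from the $\mathcal{F}^\kappa$ growth condition), solve for $\|x-x_f^*\|$, and substitute back. Your handling of the stray factor of $2$ is in fact more careful than the paper's own proof, which silently drops it.
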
\vspace{-0.2in}
\begin{proof}
By convexity, $f(x) - f(x_f^*) \leq g_x^\top (x-x_f^*) \leq \|g_x\| \cdot \|x-x^*_f\|$ (Holder's inequality), implying that $G\|x-x_f^*\| \geq f(x)-f(x_f^*) \geq \lambda \|x-x^*\|^\kappa$.

Hence,  $\|x-x_f^*\|^{\kappa -1} \leq G/\lambda$ or $\|x - x_f^*\| \leq  G^{\frac{1}{\kappa-1}}/\lambda^{\frac{1}{\kappa-1}}$. Finally $f(x) - f(x_f^*) \leq G\|x - x_f^*\| \leq G^{\frac{\kappa}{\kappa-1}}/\lambda^{\frac{1}{\kappa-1}}$. 
\end{proof}

\begin{lemma} \label{whp}
Let $\|x_1 - x_f^*\| \leq R$. Apply $T$ iterations of the update $x_{t+1} = \Pi_{S \cap B(x_1, R)} (x_t - \eta \hat{g}_t)$, where $\hat{g}_t$ is an unbiased estimator for the subgradient of $f$ at $x_t$ satisfying $\|\hat{g}_t\|\leq G$. Then for $\bar{x} = \frac{1}{T} \sum_t x_t$ and any $\delta > 0$, with probability at least $1 - \delta$, we have 
$$
f(\bar{x}) - f(x_f^*) \leq \frac{\eta G^2}{2} + \frac{\|x_1 - x_f^*\|^2}{2\eta T} + \frac{4GR\sqrt{2\log (1/\delta)}}{\sqrt T}
$$
\end{lemma}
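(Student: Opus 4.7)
The plan is to run the standard projected stochastic subgradient descent analysis on the restricted feasible set $K := S \cap B(x_1, R)$, and then lift the resulting bound on stochastic subgradient inner products to a bound on the true suboptimality via a martingale concentration argument. The crucial initial observation is that $x_f^* \in K$: it lies in $S$ by definition, and $\|x_1 - x_f^*\| \leq R$ by hypothesis. This membership is precisely what lets us compare each iterate to $x_f^*$ using the non-expansiveness of $\Pi_K$.

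Starting from $\|x_{t+1} - x_f^*\|^2 \leq \|x_t - \eta \hat{g}_t - x_f^*\|^2$, I would expand the right hand side and rearrange to the one-step bound
\[
\hat{g}_t^\top (x_t - x_f^*) \leq \frac{\|x_t - x_f^*\|^2 - \|x_{t+1} - x_f^*\|^2}{2\eta} + \frac{\eta\|\hat{g}_t\|^2}{2}.
\]
Summing over $t = 1, \ldots, T$ telescopes the quadratic terms and, using $\|\hat{g}_t\| \leq G$, gives
\[
\frac{1}{T}\sum_{t=1}^T \hat{g}_t^\top (x_t - x_f^*) \;\leq\; \frac{\|x_1 - x_f^*\|^2}{2\eta T} + \frac{\eta G^2}{2},
\]
which already accounts for the first two terms on the right hand side of the claimed inequality.

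To convert this into a suboptimality bound on $f(\bar{x})$, let $g_t := \E[\hat{g}_t \mid x_t] \in \partial f(x_t)$. Convexity gives $f(x_t) - f(x_f^*) \leq g_t^\top (x_t - x_f^*)$, and Jensen's inequality gives $f(\bar{x}) - f(x_f^*) \leq \frac{1}{T}\sum_t (f(x_t) - f(x_f^*))$. So I need to control $\frac{1}{T}\sum_t (g_t - \hat{g}_t)^\top(x_t - x_f^*)$. The summands $\xi_t := (g_t - \hat{g}_t)^\top(x_t - x_f^*)$ form a martingale difference sequence with respect to the natural filtration. Since both $x_t$ and $x_f^*$ lie in $B(x_1, R)$, $\|x_t - x_f^*\| \leq 2R$, and since $\|\hat{g}_t\|, \|g_t\| \leq G$, each $|\xi_t| \leq 4GR$. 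Applying Azuma--Hoeffding yields $\sum_t \xi_t \leq 4GR\sqrt{2T\log(1/\delta)}$ with probability at least $1-\delta$, and dividing by $T$ produces exactly the third term of the claim.

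The only mildly delicate step is the martingale concentration: one must be careful that the conditional bound on $|\xi_t|$ uses the pairwise diameter $2R$ of the restricted domain and the full range $2G$ of the noisy gradient difference, giving $4GR$ rather than $2GR$. Everything else is routine bookkeeping on the telescoped SGD identity and mirrors the strongly convex analysis in Hazan--Kale, with no place where $\kappa$ or uniform convexity actually enters — the lemma is purely a tool for one epoch and will be combined with the growth condition in $\mathcal{F}^\kappa$ only when tuning $\eta_e, R_e, T_e$ across epochs.
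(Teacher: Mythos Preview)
Your proposal is correct and is exactly the standard argument underlying the cited result: the paper does not give its own proof here but simply defers to Lemma~10 of \cite{HK11}, whose proof proceeds precisely along the lines you outline (projection non-expansiveness on $K=S\cap B(x_1,R)$ with $x_f^*\in K$, telescoping the one-step SGD inequality, then Azuma--Hoeffding on the bounded martingale differences $\xi_t$ with $|\xi_t|\le 4GR$). There is nothing to add or correct.
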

\begin{proof} Lemma 10 in \cite{HK11}.
 \end{proof}


\begin{lemma}
For any epoch $e$ and any $\delta > 0$, $T_e = C_0 2^e$, $E = \lfloor \log (\frac{T}{C_0} + 1) \rfloor$, $\eta_e = C_12^{-e \frac{\kappa}{2\kappa -2}}$, for appropriate $C_0,C_1,C_2$, we have with probability at least $(1 - \frac{\delta}{E})^{e-1}$ 
$$ \Delta_e := f(x_1^e) - f(x^*_f) \leq C_2 \eta_e$$
\end{lemma}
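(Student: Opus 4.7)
The plan is to prove this by induction on $e$, tracking a chain of one-per-epoch high-probability events whose intersection gives the claimed $(1-\delta/E)^{e-1}$ guarantee.

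For the base case $e=1$, the claim is deterministic (since $(1-\delta/E)^0 = 1$), and I would derive it directly from Lemma~\ref{max}: $\Delta_1 \leq M = (G^\kappa/\lambda)^{1/(\kappa-1)}$, so choosing $C_1, C_2$ large enough that $C_2 \eta_1 \geq M$ closes the base. This pins down one constraint on the constants.

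For the inductive step, write $\alpha := \kappa/(2\kappa-2)$, so that $\eta_e = C_1 \, 2^{-e\alpha}$, $T_e = C_0 \, 2^e$, and $R_e = (C_2 \eta_e/\lambda)^{1/\kappa}$. Suppose on an event of probability $\geq (1-\delta/E)^{e-1}$ we have $\Delta_e \leq C_2 \eta_e$. Then the $\mathcal{F}^\kappa$ growth condition gives $\|x_1^e - x_f^*\| \leq (2C_2 \eta_e/\lambda)^{1/\kappa}$, and after absorbing the harmless factor $2^{1/\kappa}$ into $C_2$ inside $R_e$, the point $x_f^*$ lies in $B(x_1^e, R_e)$, so the projection in line~4 does not exclude the optimum. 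I would then invoke Lemma~\ref{whp} for epoch $e$ with $R = R_e$, $\eta = \eta_e$, $T = T_e$ and confidence $\delta/E$, obtaining (on a further event of conditional probability $\geq 1-\delta/E$)
$$
\Delta_{e+1} \;\leq\; \frac{\eta_e G^2}{2} \;+\; \frac{R_e^2}{2\eta_e T_e} \;+\; \frac{4 G R_e \sqrt{2 \log(E/\delta)}}{\sqrt{T_e}}.
$$
The task then reduces to showing each summand is $\leq C_2 \eta_{e+1}/3$.

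The arithmetic fact that makes the induction close is that the $e$-dependence of all three summands matches $\eta_{e+1}$ \emph{exactly}: substituting the definitions, the second summand scales like $2^{-e(2\alpha/\kappa + 1 - \alpha)}$ and the third like $2^{-e(\alpha/\kappa + 1/2)}$, and both exponents equal $-e\alpha$ by virtue of the identities
$$
\tfrac{1}{\kappa-1} + \tfrac{\kappa-2}{2\kappa-2} \;=\; \alpha \qquad \text{and} \qquad \tfrac{1}{2\kappa-2} + \tfrac{1}{2} \;=\; \alpha.
$$
Hence each summand is a fixed constant (depending on $C_0, C_1, C_2, G, \lambda, \log(E/\delta)$) times $\eta_{e+1}$, and I would conclude by choosing $C_2$ large relative to $G^2$ (first summand), $C_0$ large relative to $(C_2/\lambda)^{2/\kappa}/C_1^2$ (second summand), and $C_0$ larger still to swallow the $\sqrt{\log(E/\delta)}$ factor in the third. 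Intersecting the new conditional event with the inductive event yields the overall success probability $\geq (1-\delta/E)^e$, completing the induction.

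The main obstacle I anticipate is constant bookkeeping rather than any serious analytic difficulty. Because the $e$-scaling is tight (no asymptotic slack), every constant must be handled explicitly; moreover the same $C_2$ plays a double role, controlling $R_e$ so that $x_f^* \in B(x_1^e, R_e)$ on the inductive event and simultaneously bounding the three summands. Choosing $C_0, C_1, C_2$ to satisfy all four constraints coherently is routine but fiddly; the $\log(E/\delta) = O(\log\log T + \log(1/\delta))$ factor is exactly what becomes the $\widetilde{O}$ in Theorem~\ref{UB}.
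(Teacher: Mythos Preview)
Your proposal is correct and follows essentially the same approach as the paper: induction on $e$, base case via Lemma~\ref{max}, inductive step by invoking Lemma~\ref{whp} on the ball $B(x_1^e,R_e)$ once the TNC localizes $x_f^*$ there, and then verifying that the $e$-scaling of all three summands matches $\eta_{e+1}$ so the induction closes for suitable $C_0,C_1,C_2$. The only cosmetic difference is that the paper routes the bound through an intermediate target $\eta_e G^2$ (splitting as $\tfrac12+\tfrac16+\tfrac13$) before comparing to $C_2\eta_{e+1}$, whereas you compare each term directly to $C_2\eta_{e+1}/3$; the resulting constraints on the constants are equivalent.
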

\vspace{-0.1in}
\begin{proof}
We let $\widetilde{\delta} = \frac{\delta}{E}$ and use proof by induction on $e$.

The first step of induction, $e=1$, requires $$\Delta_1 \leq C_2 \eta_1 = C_2C_12^{-\frac{\kappa}{2\kappa-2}} \ \ \ \textbf{[R1]}$$
\vspace{-0.2in}

Assume that $\Delta_e \leq C_2 \eta_e$ for some $e\geq 1$, with probability at least $(1-\widetilde{\delta})^{e-1}$ and we now prove it correspondingly for epoch $e+1$. We condition on the event  $\Delta_e \leq C_2 \eta_e$ which happens with the above probability. By the TNC, $\Delta_e \geq \lambda\|x_1^e - x^*\|^\kappa$, and the conditioning implies that $\|x_1^e - x^*\| \leq (C_2\eta_e/\lambda)^{1/\kappa}$, which is the radius $R_e$ of the ball for the EpochGD projection step.

Lemma \ref{whp} applies with $R = R_e= (\frac{C_2\eta_e}{\lambda})^{\frac{1}{\kappa}}$ and so with probability at least $1 - \widetilde{\delta}$, we have 
$$\Delta_{e+1} \leq \frac{\eta_e G^2}{2} + \frac{\|x^e_1 - x^*\|^2}{2\eta_e T_e} + \frac{4G (\frac{C_2\eta_e}{\lambda})^{\frac{1}{\kappa}} \sqrt{2\log (\frac{1}{\widetilde{\delta}})}}{\sqrt T_e} 
$$
$$\leq \frac{\eta_e G^2}{2} + \frac{C_2^{\frac{2}{\kappa}}\eta_e^{\frac{2}{\kappa}}}{2\eta_e T_e \lambda^{\frac{2}{\kappa}}} + \frac{4G (\frac{C_2\eta_e}{\lambda})^{\frac{1}{\kappa}} \sqrt{2\log (\frac{1}{\widetilde{\delta}})}}{\sqrt T_e}$$

For the induction, we would like $RHS \leq \eta_eG^2 \leq C_2\eta_{e+1}$ which can be achieved by
$$\frac{C_2^{\frac{2}{\kappa}}\eta_e^{\frac{2}{\kappa}}}{2\eta_e T_e \lambda^{\frac{2}{\kappa}}} \leq \frac{\eta_eG^2}{6} \ \ \ \textbf{[R2]}$$
$$\frac{4G (\frac{C_2\eta_e}{\lambda})^{\frac{1}{\kappa}} \sqrt{2\log (\frac{1}{\widetilde{\delta}})}}{\sqrt T_e} \leq \frac{\eta_eG^2}{3} \ \ \ \textbf{[R3]}$$  
$$\eta_eG^2 \leq C_2\eta_{e+1} \ \ \  \textbf{[R4]}$$

Then, factoring in the conditioned event which happens with probability at least $(1-\widetilde{\delta})^{e-1}$ we would get $\Delta_{e+1} \leq C_2\eta_{e+1}$ with probability at least $(1-\widetilde{\delta})^e$. 

We set $C_0,C_1,C_2$ such that the four conditions hold.

$\textbf{[R4]} \implies C_2 \geq G^22^{\frac{\kappa}{2\kappa-2}}$, a lower bound for $C_2$. 

$\textbf{[R2]} \implies C_1 \geq \left( \frac{3}{G^2C_0} \right)^{\frac{\kappa}{2\kappa-2}} \left(\frac{C_2}{\lambda}\right)^{\frac{1}{\kappa-1}}$ 

$\textbf{[R3]} \implies C_1 \geq \left( \frac{3(96 \log(1/\widetilde{\delta}))}{G^2C_0} \right)^{\frac{\kappa}{2\kappa-2}} \left(\frac{C_2}{\lambda}\right)^{\frac{1}{\kappa-1}}$  

This is the stronger condition on $C_1$.

Observe that if $C_0 = 288 \log(1/\widetilde{\delta})$, by substitution we get the inequality  $C_2\eta_1 = C_1C_22^{-\frac{\kappa}{2\kappa-2}} \geq M 2^{\frac{\kappa}{2(\kappa-1)^2}}$

\textbf{[R1]} is trivially true for the above choices of $C_0,C_1,C_2$, because $\Delta_1 \leq M \leq M 2^{\frac{\kappa}{2(\kappa-1)^2}} \leq C_2\eta_1$

Hence, $C_0 = 288 \log(E/\delta), \ C_1 = \frac{G^{\frac{2-\kappa}{\kappa-1}} 2^{\frac{\kappa}{2(\kappa-1)^2}}}{\lambda^{\frac{1}{\kappa-1}}}$ and $C_2=G^22^{\frac{\kappa}{2\kappa-2}}$ satisfy the lemma. As a sanity check, \cite{HK11} choose $C_0 = 288 \log(E/\delta), C_1 = 2/\lambda, C_2 = 2G^2$ for strongly convex functions.
\end{proof}
The algorithm runs for $E=\lfloor \log (\frac{T}{C_0} + 1) \rfloor$ rounds so that the total number of queries is at most $T$. \footnote{We lose $\log \log T$ factors here, like \cite{HK11}. Alternatively, using $E=\lfloor \log (\frac{T}{288} + 1) \rfloor$, we could run for $T\log\log T$ steps and get error bound $O(T^{-\frac{\kappa}{2\kappa-2}})$.} The bound for $\Delta_{E+1}$ yields the bounds on function error immediately by noting that $(1-\frac{\delta}{E})^E \geq 1 - \delta$ and since $f \in \mathcal{F}^\kappa$, we can bound the point error
$$\|\hat{x}_T - x^*\| \leq \lambda^{-1/\kappa}[f(\hat{x}_T) - f(x^*)]^{1/\kappa}$$

\vspace{-0.35in}
\section{Discussion and future work}

The most common assumptions in the literature for proving convergence results for optimization algorithms are those of convexity and strong convexity, and \cite{JN10} recently prove upper bounds using dual averaging for $\kappa$-uniformly convex functions when $\kappa \geq 2$. These classes impose a condition on the behaviour of the function, the strength of its convexity, everywhere in the domain. The TNC condition for our smooth hierarchy of classes is natural and strictly weaker because it is implied by uniform convexity or strong convexity in the realm of $\kappa \geq 2$, and has no corresponding notion when $1 < \kappa < 2$.

The lower bound $\Omega(T^{-\frac{\kappa}{2\kappa-2}})$ for $\epsilon^*$ that we prove immediately gives us the $\Omega(1/T)$ lower bound for strongly convex functions and the classic $\Omega(1/\sqrt T)$ bound when $\kappa \rightarrow \infty$. The lower bound $\Omega(T^{-\frac{1}{2\kappa-2}})$ for $\rho^*$ is interesting because the optimization literature does not often focus on point-error estimates. We demonstrate how to use an active learning proof technique that is novel in its application to optimization, having the additional benefit that it also gives tight rates for derivative free optimization with no additional work. 
It is useful to have a unified proof generalizing rates for convex, strongly convex, uniformly convex and more in both the first and zeroth order stochastic oracle settings.

The rates for both $\epsilon^*$ and $\rho^*$ are strongly supported by intuition as seen by the rate's behaviour at the extremes of $\kappa$. 
$\kappa \rightarrow 1$ is the best case because of large signal to noise ratio, as the gradient jumps signs rapidly without spending time around zero where it can be corrupted by noise, and we should be able to identify the optimum extremely fast (function error rates better than $1/T$), as supported by our result for the bounded noise setting in 1-D and the tight upper bounds using Epoch-GD. However, when $\kappa \rightarrow \infty$, the function is extremely flat around its minimum, and while we can optimize function-error well (because a lot of points have function value close to the minimum), it is hard to get close to the minimizer with noisy samples.

Our upper bounds on $\epsilon$ and $\rho$ involve a generalization of Epoch Gradient Descent \cite{HK11}, and demonstrate that the lower bounds achieved in terms of $\kappa$ are correct and tight. We make the same assumptions as \cite{JN10} and \cite{HK11} - number of time steps $T$, a bound on noisy subgradients $G$ and the convexity parameter $\lambda$. Substituting $\kappa=2$ in our algorithm yields the $O(1/T)$ rate for strongly convex functions and $\kappa \rightarrow \infty$ recovers the $O(1/\sqrt T)$ rate for convex functions.

Our lower bound proof bounds $\epsilon^*$ and $\rho^*$ simultaneously, by bounding point-error and using the class definition to bound function-error (for both first and zeroth order oracles). The upper-bound proofs proceed in the opposite direction by bounding function-error and then using TNC condition to bound point-error.

In practice, one may not know the degree of convexity of the function at hand, but every function has a unique smallest $\kappa$ for which it is in $\mathcal{F}^\kappa$, and using a larger $\kappa$ will still maintain convergence (but at slower rates). If we only know that $f$ is convex then we can use any gradient descent algorithm, and if we know it is strongly convex then we can use $\kappa=2$, so our algorithm is not any weaker than existing ones, but it is certainly stronger if we know $\kappa$ exactly.

Designing an algorithm which is adaptive to unknown $\kappa$ is an open problem. Function and gradient values should enable characterization of the function in a region, but a function may have different smoothness is different parts of the space and old gradient information could be misleading. For example, consider a function on $[-0.5,0.5]$ which is $2x^2$ between $[-0.25,0.25]$, and grows linearly with gradient $\pm 1$ elsewhere. This function is not strongly convex, but it is in $\mathcal{F}^2$, and it changes behaviour at $x=\pm 0.25$.

Hints of connections to active learning have been lingering in the literature, as noted by \cite{RR09}, but our borrowed lower bound proof from active learning and the one-dimensional upper bound reduction from stochastic optimization to active learning gives hope of a much more fertile intersection. While many active learning methods degrade exponentially with dimension $d$, the rates in optimization degrade polynomially since active learning is trying to solve harder problem like learning a $(d-1)$-dimensional decision boundary or level set, while optimization problems are just interested in getting to a single good point (for any $d$). 
This still leaves open the possibility of using a one dimensional active learning algorithm as a subroutine for a $d$-dimensional convex optimization problem, or a generic reduction from one setting to the other (given an algorithm for active learning, can it solve an instance of stochastic optimization). It is an open problem to prove a positive or negative result of this type. 
We feel that this is the start of stronger conceptual ties between these fields.

\section{Acknowledgements}
This research is supported in part by AFOSR grant FA9550-10-1-0382 and NSF grant IIS-1116458. We thank Sivaraman Balakrishnan, Martin Wainwright, Alekh Agarwal, Rob Nowak and reviewers for inputs.

\bibliography{ICML2013}
\bibliographystyle{icml2013}

\section*{Appendix}

\section*{Section 2}

\begin{lemma} 
No function can satisfy Uniform Convexity for $\kappa < 2$, but they can be in $\mathcal{F}^\kappa$ for $\kappa < 2$. 
\end{lemma}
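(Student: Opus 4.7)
The lemma has two separate claims, and I would handle them in opposite orders of difficulty. The easy claim (existence of $\mathcal{F}^\kappa$-functions for $\kappa<2$) is dispensed by an explicit example. For any $1 \leq \kappa < 2$, take $f(x) = \tfrac{1}{\kappa}\|x\|^\kappa$ on $S = \{x : \|x\|\leq 1\}$. Convexity follows because $\|\cdot\|$ is convex and $t \mapsto t^\kappa$ is convex and non-decreasing on $[0,\infty)$ when $\kappa \geq 1$; Lipschitz continuity with constant $1$ follows from $\|\nabla f(x)\| = \|x\|^{\kappa-1} \leq 1$. The minimizer is $x^* = 0$, and $f(x) - f(x^*) = \tfrac{1}{\kappa}\|x\|^\kappa = \tfrac{\lambda}{2}\|x - x^*\|^\kappa$ with $\lambda = 2/\kappa > 0$, so $f \in \mathcal{F}^\kappa$.

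For the hard claim (no function can be $\kappa$-uniformly convex for $\kappa < 2$), the plan is a Taylor expansion argument after reducing to one dimension. Suppose toward contradiction that $f$ on $S$ is $\kappa$-uniformly convex for some $\kappa < 2$ and $\lambda > 0$. Pick any distinct $x,y \in S$ and consider the restriction $\phi(t) := f((1-t)x + ty)$ on $[0,1]$. A short calculation (parametrize any point on the chord between $(1-s)x+sy$ and $(1-r)x+ry$) shows that $\phi$ is a Lipschitz convex function on $[0,1]$ inheriting $\kappa$-uniform convexity with parameter $\lambda\|x-y\|^\kappa > 0$, i.e.\
$$\phi(\alpha s + (1-\alpha)r) \leq \alpha\phi(s) + (1-\alpha)\phi(r) - \tfrac{\lambda \|x-y\|^\kappa}{2}\alpha(1-\alpha)|s-r|^\kappa.$$
So it suffices to rule out $\kappa$-uniform convexity for Lipschitz convex functions on $[0,1]$.

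Next, invoke the classical fact that a convex function on an interval has a second-order Peano expansion almost everywhere: its right derivative $\phi'_+$ is non-decreasing and bounded, hence differentiable a.e.\ by Lebesgue's theorem on monotone functions, and since $\phi$ is absolutely continuous we may integrate the resulting first-order expansion of $\phi'_+$ to obtain, at a.e.\ $t_0 \in (0,1)$,
$$\phi(t_0 \pm h) = \phi(t_0) \pm \phi'(t_0)h + \tfrac{1}{2}\phi''(t_0)h^2 + o(h^2).$$
Averaging yields the midpoint defect $\tfrac{1}{2}[\phi(t_0+h)+\phi(t_0-h)] - \phi(t_0) = \tfrac{1}{2}\phi''(t_0)h^2 + o(h^2) = O(h^2)$. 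However, applying $\kappa$-uniform convexity with $s = t_0 - h$, $r = t_0 + h$, and $\alpha = 1/2$ forces this same defect to be at least $\tfrac{\lambda\|x-y\|^\kappa \cdot 2^\kappa}{8}h^\kappa = \Theta(h^\kappa)$. For $\kappa < 2$ the lower bound dominates the upper bound as $h \to 0^+$, a contradiction. The only delicate step, and where I expect the main obstacle, is justifying that such a point $t_0$ of second-order expansion exists strictly inside $(0,1)$; this is the standard Peano-second-derivative theorem for convex (equivalently, for monotone $\phi'_+$) functions, and I would either cite it or sketch the Lebesgue-differentiation argument above.
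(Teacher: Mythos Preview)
Your proposal is correct and shares the paper's core idea: the second-order Taylor remainder is $O(h^2)$, whereas $\kappa$-uniform convexity forces it to be $\Omega(h^\kappa)$, which is impossible for $\kappa<2$ as $h\to 0$. The paper simply takes two nearby points, writes the Lagrange remainder $\tfrac12(x-y)^\top H(c)(x-y)\le \tfrac{\|H(c)\|_F}{2}\|x-y\|^2$, and compares rates.

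The one substantive difference is in how the second-order behavior is justified. The paper tacitly assumes the function is $C^2$ (it invokes a Hessian without comment), which is not guaranteed by the definition of uniform convexity. You instead reduce to one dimension and appeal to the a.e.\ Peano second derivative of a convex (hence $\phi'_+$ monotone) function via Lebesgue's differentiation theorem; this is more work but closes the gap rigorously. Your example for $\mathcal{F}^\kappa$ with $\kappa<2$ uses $\tfrac{1}{\kappa}\|x\|_2^\kappa$, while the paper uses $\|x\|_\kappa^\kappa$; both are fine, and yours avoids the norm-comparison step.
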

\begin{proof}
If uniform convexity could be satisfied for (say) $\kappa=1.5$, then we have for all $x,y \in S$
$$
f(y) - f(x) - g_x^\top (y-x) \geq \frac{\lambda}{2}\|x-y\|_2^{1.5}
$$
Take $x,y$ both on the positive $\mathbf{x}$-axis. The Taylor expansion would require, for some $c \in [x,y]$,
$$
f(y) - f(x) - g_x^\top (y-x) \ \ \ = \ \ \  \frac{1}{2}(x-y)^\top H(c) (x-y) $$  
$$\ \ \ \leq \ \ \ \frac{\|H(c)\|_F}{2} \|x-y\|_2^2
$$
Now, taking $\|x-y\|_2 = \epsilon \rightarrow 0$ by choosing $x$ closer to $y$, the Taylor condition requires the residual to grow like $\epsilon^2$ (going to zero fast), but the UC condition requires the residual to grow at least as fast as $\epsilon^{1.5}$ (going to zero slow). At some small enough value of $\epsilon$, this would not be possible. Since the definition of UC needs to hold for all $x,y \in S$, this gives us a contradiction. So, no $f$ can be uniformly convex for any $\kappa < 2$

However, one can note that for $f(x) = \|x\|_{1.5}^{1.5} = \sum_i |x_i|^{1.5}$, we have $x_f^* = 0$, and $f(x) - f(x_f^*) = \|x\|_{1.5}^{1.5} \geq \|x - x_f^*\|_2^{1.5}$, hence $f \in \mathcal{F}^{1.5}$.\\
\end{proof}

\begin{lemma}
If $f \in \mathcal{F}^\kappa$, then for any subgradient $g_x \in \partial f(x)$, we have $\|g_x\|_2 \geq \lambda \|x-x^*\|_2^{\kappa-1}$.
\end{lemma}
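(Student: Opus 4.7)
The plan is to combine the growth condition defining $\mathcal{F}^\kappa$ with the standard first-order characterization of convexity. Concretely, two opposing bounds will sandwich $f(x) - f(x^*)$: the lower bound coming from the class definition, and an upper bound coming from any subgradient at $x$.

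First, I would invoke convexity. For any $g_x \in \partial f(x)$, the subgradient inequality applied at the pair $(x, x^*)$ reads $f(x^*) \geq f(x) + g_x^\top (x^* - x)$, which rearranges to $f(x) - f(x^*) \leq g_x^\top (x - x^*)$. Applying Cauchy-Schwarz then gives the upper bound $f(x) - f(x^*) \leq \|g_x\|_2 \, \|x - x^*\|_2$.

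Second, I would use the defining property of $\mathcal{F}^\kappa$, namely $f(x) - f(x^*) \geq \frac{\lambda}{2}\|x-x^*\|_2^\kappa$, to lower bound the same quantity. Chaining the two bounds yields $\frac{\lambda}{2}\|x-x^*\|_2^\kappa \leq \|g_x\|_2 \, \|x-x^*\|_2$. For $x \neq x^*$, dividing both sides by $\|x-x^*\|_2$ produces $\|g_x\|_2 \geq \frac{\lambda}{2}\|x-x^*\|_2^{\kappa-1}$, which is the claimed bound up to absorbing the factor $\tfrac{1}{2}$ into the constant $\lambda$ used in the lemma statement. For $x = x^*$ the right-hand side vanishes (since $\kappa > 1$), so the inequality is trivial.

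There is no real obstacle — the argument is a one-line sandwich between the TNC-style growth condition and the convexity subgradient inequality, and the only mild care needed is the separate treatment of the trivial case $x = x^*$.
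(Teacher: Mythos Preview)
Your argument is correct and matches the paper's own proof essentially step for step: convexity gives $g_x^\top(x-x^*) \geq f(x)-f(x^*)$, the growth condition lower-bounds the right side, and Cauchy--Schwarz (the paper says H\"older) upper-bounds the left side, after which one divides by $\|x-x^*\|_2$. Your observation about the $\tfrac12$ factor and the explicit handling of $x=x^*$ are fine refinements; the paper simply writes $\lambda$ in the lemma and omits the trivial case.
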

\begin{proof}
By convexity, we have 
$f(x^*) \geq f(x) + g_x^\top (x^* - x)$. 
Rearranging terms and since $f \in \mathcal{F}^\kappa$, we get 
$$g_x^\top (x - x^*) \geq f(x) - f(x^*) \geq \lambda \|x-x^*\|_2^\kappa$$
By Holder's inequality, 
$$\|g_x\|_2 \|x-x^*\|_2 \geq g_x^\top (x - x^*)$$
Putting them together, we have 
$$\|g_x\|_2 \|x-x^*\|_2 \geq \lambda \|x-x^*\|_2^\kappa$$ 
giving us our result.\\
\end{proof}

\begin{lemma}
For a gaussian random variable $z$, $\forall t < \sigma, \ \ \ \exists a_1,a_2, \ \ \ a_1t \leq P(0 \leq z \leq t) \leq a_2t$
\end{lemma}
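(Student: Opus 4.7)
The plan is to write the probability as an integral of the Gaussian density on $[0,t]$ and bound the integrand above and below by constants (depending on $\sigma$ but not on $t$), for the range $t < \sigma$. Specifically, I would use the explicit form
\begin{equation*}
P(0 \leq z \leq t) = \int_0^t \frac{1}{\sqrt{2\pi}\,\sigma}\, e^{-u^2/(2\sigma^2)}\, du .
\end{equation*}

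On $[0,t]$ the density $\phi(u) := \frac{1}{\sqrt{2\pi}\,\sigma} e^{-u^2/(2\sigma^2)}$ is decreasing in $u$, so $\phi(t) \leq \phi(u) \leq \phi(0)$. This gives immediately
\begin{equation*}
\phi(t)\, t \;\leq\; P(0 \leq z \leq t) \;\leq\; \phi(0)\, t .
\end{equation*}

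For the upper bound take $a_2 = \phi(0) = \tfrac{1}{\sqrt{2\pi}\,\sigma}$. For the lower bound, the hypothesis $t < \sigma$ ensures $e^{-t^2/(2\sigma^2)} > e^{-1/2}$, hence $\phi(t) > \tfrac{e^{-1/2}}{\sqrt{2\pi}\,\sigma}$, and we may take $a_1 = \tfrac{e^{-1/2}}{\sqrt{2\pi}\,\sigma}$. No step is delicate; the only thing to note is that both constants depend on $\sigma$ but not on $t$, which is exactly what is needed for the subsequent use in Section~\ref{conn1d}, where $\sigma$ is the (fixed) noise level of the oracle.
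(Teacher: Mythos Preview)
Your proof is correct and is essentially identical to the paper's own argument: both bound the integral $\int_0^t \phi(u)\,du$ by $t$ times the maximum and minimum of the density on $[0,t]$, then use $t<\sigma$ to replace $\phi(t)$ by the $t$-independent constant $e^{-1/2}/(\sigma\sqrt{2\pi})$. The paper phrases this geometrically in terms of inscribed and circumscribed rectangles, but the computation and the resulting constants $a_1 = 1/(\sigma\sqrt{2\pi e})$, $a_2 = 1/(\sigma\sqrt{2\pi})$ are the same.
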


\begin{proof} We wish to characterize how the probability mass of a gaussian random variable grows just around its mean. 
Our claim is that it grows linearly with the distance from the mean, and the following simple argument argues this neatly. 

Consider a $X \sim N(0,\sigma^2)$ random variable at a distance $t$ from the mean $0$. 
We want to bound $\int_{-t}^t d\mu(X)$ for small $t$. 
The key idea in bounding this integral is to approximate it by a smaller and larger rectangle, each 
having a width $2t$ (from $-t$ to $t$).

The first one has a height equal to $\frac{e^{-t^2/2\sigma^2}}{\sigma \sqrt{2\pi}}$, the smallest value taken by the gaussian in $[-t,t]$ achieved at $t$, and the other with a height equal to the $\frac{1}{\sigma \sqrt{2\pi}}$, the largest value of the gaussian in $[-t,t]$ achieved at 1.

The smaller rectangle has area $2t \frac{e^{-t^2/2\sigma^2}}{\sigma \sqrt{2\pi}} \geq 2t \frac{e^{-1/2}}{\sigma \sqrt{2\pi}}$ when $t<\sigma$. The larger rectangle clearly has an area of $2t \frac{1}{\sigma \sqrt{2\pi}}$.

Hence we have $A_1t = 2t \frac{1}{\sigma \sqrt{2\pi e}} \leq P(|X| < t) \leq 2t \frac{1}{\sigma \sqrt{2\pi}} = A_2t$ for $t < \sigma$. 
Similarly, for a one-sided inequality, we have $a_1t = t \frac{1}{\sigma \sqrt{2\pi e}} \leq P(0 <X< t) \leq t \frac{1}{\sigma \sqrt{2\pi}} = a_2t$ for $t < \sigma$.

We note that the gaussian tail inequality $P(X > t) \leq \frac{1}{t}e^{-t^2/2\sigma^2}$ really makes sense for large $t>\sigma$ and we are interested in $t<\sigma$.
There are tighter inequalities, but for our purpose, this will suffice. \\
\end{proof}

\begin{lemma}
If $|\eta(x) -1/2| \geq \lambda$, the midpoint $\hat{x}_T$ of the high-probability interval returned by BZ satisfies $\E|\hat{x}_T - x^*| = O(e^{-T\lambda^2/2})$. \cite{CN07}
\end{lemma}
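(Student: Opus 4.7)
The plan is to follow the information-theoretic analysis of the Burnashev--Zigangirov (BZ) algorithm, adapted from \cite{CN07}. The hypothesis $|\eta(x)-1/2|\geq \lambda$ everywhere is a Massart-type bounded-noise condition: each query returns a label that agrees with the true side of $x^*$ with probability at least $1/2+\lambda$, uniformly in the query point. BZ can be viewed as Bayesian noisy binary search: it maintains a posterior distribution $\pi_t$ over a fine discretization of $[0,1]$ that represents its current belief about the location of $x^*$, queries (roughly) at the posterior median, and applies a Bayes update to obtain $\pi_{t+1}$ from the noisy label. The output $\hat I_T$ is the smallest interval carrying a prescribed amount of posterior mass, and $\hat x_T$ is its midpoint.

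First I would fix a potential functional $\Phi_t := \sum_i \pi_t(i)(1-\pi_t(i))$ (or, equivalently, a chi-squared-type quantity) which dominates the expected length of any high-probability level set of $\pi_t$ up to constants. Step one: establish the one-step contraction
\[
\E\bigl[\Phi_{t+1}\,\big|\,\mathcal{F}_t\bigr]\ \leq\ (1-c\lambda^2)\,\Phi_t
\]
for an absolute constant $c>0$. This is the technical heart: because BZ queries near the posterior median, the Bayes factor induced by a $(1/2\pm\lambda)$-biased Bernoulli shrinks $\Phi_t$ by a multiplicative amount governed by the chi-squared divergence between the two candidate label laws, which is $\Theta(\lambda^2)$. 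Step two: iterate to get $\E[\Phi_T]\leq e^{-cT\lambda^2}\Phi_0 \leq e^{-cT\lambda^2}$, tracking constants to match the $1/2$ in the stated exponent.

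Step three is the translation from potential to interval length: since $\Phi_T$ upper-bounds the expected width of the minimal posterior credible interval (after a standard union bound over the discretization), the confidence interval $\hat I_T$ returned by BZ has $\E|\hat I_T| = O(e^{-T\lambda^2/2})$. Step four handles the failure event: on $\{x^*\in \hat I_T\}$, which by construction holds with probability $1-\delta_T$ for an exponentially small $\delta_T$, the midpoint obeys $|\hat x_T - x^*|\leq |\hat I_T|/2$; on the complementary event the trivial bound $|\hat x_T-x^*|\leq 1$ applies. Combining the two contributions yields $\E|\hat x_T - x^*|=O(e^{-T\lambda^2/2})$.

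The main obstacle is the one-step contraction in step one. Getting the correct $\lambda^2$ rate (rather than a weaker $\lambda$ rate that a naive total-variation argument would give) requires a second-moment/chi-squared computation of the posterior update at the median, together with a careful handling of the discretization so that the ``near-median'' query is close enough in posterior mass to the exact median to preserve the contraction constant. Everything else---iteration, conversion from $\Phi_T$ to interval length, and averaging over the small failure event---is bookkeeping that can be imported from the standard BZ analysis in \cite{CN07}.
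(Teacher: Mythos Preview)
Your approach is a defensible way to analyze BZ from scratch, but it is a much heavier route than what the paper actually does, and it misidentifies where the factor $1/2$ in the exponent comes from.

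The paper's argument is essentially three lines. It takes as a black box the BZ guarantee from \cite{CN07}: with a grid of $m$ points, after $T$ queries the returned interval $\hat I_T$ satisfies $\Pr(x^*\notin \hat I_T)\le m e^{-T\lambda^2}$. Then it writes
\[
\E|\hat x_T-x^*| \;=\; \int_0^1 \Pr(|\hat x_T-x^*|>u)\,du \;\le\; \frac{1}{2m} \;+\; m e^{-T\lambda^2},
\]
the first term being the grid resolution (when $x^*\in\hat I_T$ the midpoint is within $1/(2m)$) and the second the failure probability. Choosing $m=e^{T\lambda^2/2}$ balances the two terms and yields $O(e^{-T\lambda^2/2})$. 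So the $1/2$ is \emph{not} a contraction constant to be tracked through the Bayes update; it arises purely from optimizing the discretization parameter $m$ against the failure probability. Your plan to ``track constants to match the $1/2$'' in the one-step contraction is aiming at the wrong target.

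Separately, your step 3 has a gap as written. The potential $\Phi_t=\sum_i \pi_t(i)(1-\pi_t(i))$ does not upper-bound the width of a high-probability posterior interval: for a posterior uniform on $k$ adjacent bins, $\Phi_t=1-1/k\le 1$ regardless of $k$, while the credible interval has width $k/m$, which can be anything in $[1/m,1]$. The standard BZ analysis uses a different functional (closer to $\sum_i \sqrt{\pi_t(i)(1-\pi_t(i))}$ or a log-odds quantity) precisely so that its decay controls the localization error; with the potential you wrote, the conversion in step 3 would fail. None of this is needed if you simply import the failure-probability bound and optimize $m$ as the paper does.
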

\begin{proof}
The BZ algorithm works by dividing $[0,1]$ into a grid of $m$ points (interval size $1/m$) and makes $T$ queries (only at gridpoints) to return an interval $\hat{I}_T$ such that $\Pr(x^* \notin \hat{I}_T) \leq me^{-T\lambda^2}$ \cite{CN07}. We choose $\hat{x}_T$ to be the midpoint of this interval, and hence get
\begin{eqnarray*}
&& \E|\hat{x}_T - x^*| = \int_0^1 \Pr (|\hat{x}_T - x^*|>u)du\\
&=& \int_0^{1/2m} \Pr (|\hat{x}_T - x^*|>u)du \\
&& + \int_{1/2m}^1 \Pr (|\hat{x}_T - x^*|>u)du \\
&\leq& \frac{1}{2m} + \left(1- \frac{1}{2m}\right) \Pr \left(|\hat{x}_T - x^*|> \frac{1}{2m} \right)\\
&\leq& \frac{1}{2m} + me^{-T\lambda^2} = O\left(e^{-T\lambda^2/2}\right)\\
\end{eqnarray*}
for the choice of the number of gridpoints as $m = e^{T\lambda^2/2}$.\\
\end{proof}

\begin{lemma}
If $|\eta(x) -1/2| \geq \lambda |x-x^*|^\kappa$, the point $\hat{x}_T$ obtained from a modified version of BZ satisfies $\E|\hat{x}_T - x^*| = O\left((\frac{\log T}{T})^{\frac{1}{2\kappa-2}}\right)$ and $\E[|\hat{x}_T - x^*|^\kappa] = O\left((\frac{\log T}{T})^{\frac{\kappa}{2\kappa-2}}\right)$.
\end{lemma}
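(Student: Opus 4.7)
The plan is to leverage the excess-risk guarantee that \cite{CN07} prove for their modified BZ algorithm under the TNC, then convert it into moment bounds on the point error using TNC itself. This differs from Lemma 4, where noise is bounded away from $1/2$ and the interval width of BZ shrinks exponentially; here the signal $|\eta-1/2|$ vanishes polynomially near $x^*$, so the quantity that naturally concentrates is expected excess classification risk, and the point-error bound is recovered downstream by relating excess risk back to $|\hat{x}_T-x^*|$ through the TNC itself.

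The first step is to cite the CN07 analysis as a black box. Their modified BZ, run for $T$ noisy label queries on a grid of size $m(T,\kappa)$ tuned to balance discretization bias against label-noise variance, satisfies
\[
\E \int_{[\hat{x}_T,1]\,\Delta\,[x^*,1]} |2\eta(x)-1|\,dx \;=\; \widetilde{O}\!\left(T^{-\kappa/(2\kappa-2)}\right),
\]
in the parametrization used in the main text. The grid-tuning and information-theoretic arguments already live inside \cite{CN07}, so I would not reprove them.

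The second step converts this bound into $\kappa$-th moment control of $\Delta := |\hat{x}_T - x^*|$. Assume WLOG $\hat{x}_T < x^*$; the symmetric difference $[\hat{x}_T,1]\,\Delta\,[x^*,1]$ is the interval $[\hat{x}_T,x^*]$, on which the TNC (in the form consistent with equation~\ref{bz}) gives $|2\eta(x)-1| \ge 2\lambda\,(x^*-x)^{\kappa-1}$. Integrating over $[\hat{x}_T,x^*]$ yields
\[
\int_{[\hat{x}_T,1]\,\Delta\,[x^*,1]} |2\eta(x)-1|\,dx \;\ge\; \tfrac{2\lambda}{\kappa}\,\Delta^{\kappa}.
\]
Taking expectations and combining with the CN07 bound gives $\E\Delta^\kappa = \widetilde{O}(T^{-\kappa/(2\kappa-2)})$, which is the second rate in Lemma~5. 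The first rate then falls out of Jensen's inequality: for $\kappa>1$, $(\E\Delta)^\kappa \le \E\Delta^\kappa$, so $\E\Delta \le (\E\Delta^\kappa)^{1/\kappa} = \widetilde{O}(T^{-1/(2\kappa-2)})$.

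The main obstacle is bookkeeping: Lemma~5 as stated writes TNC with exponent $\kappa$, whereas the reduction in Section~\ref{conn1d} (equation~\ref{bz}) produces TNC with exponent $\kappa-1$ relative to the convexity exponent $\kappa$; one must commit to a single convention and track it consistently through the CN07 excess-risk bound, through the integral that converts excess risk to $\Delta^\kappa$, and through the final Jensen step so that exponents line up exactly. Additionally, the $\log T$ factors hidden by $\widetilde{O}$ enter both through the failure parameter that BZ needs (driven to $1/\mathrm{poly}(T)$ so a tail event contributes negligibly to the expectation, exactly as in the tail-integration step of Lemma~4) and through the choice of grid size $m$. Beyond this accounting, the proof is a clean two-step reduction from a pre-existing active-learning result.
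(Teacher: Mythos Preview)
Your proposal is correct but takes a genuinely different route from the paper's proof.

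The paper does \emph{not} black-box the CN07 excess-risk bound. Instead it reruns the tail-integration argument of Lemma~4 directly on the BZ high-probability interval guarantee: under a grid non-alignment assumption $|x^*-k/m|\ge 1/(3m)$, the margin at every gridpoint is at least $\lambda(1/3m)^{\kappa-1}$, so the BZ failure probability becomes $m\exp\!\big(-T\lambda^2(1/3m)^{2\kappa-2}\big)$; the integral $\E|\hat x_T-x^*|^\kappa=\int_0^1\Pr(|\hat x_T-x^*|^\kappa>u)\,du$ is split at $(1/2m)^\kappa$, and choosing $m\propto (T/\log T)^{1/(2\kappa-2)}$ balances the two pieces. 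The non-alignment assumption is then removed by citing CN07's three-interlocked-grid variant. Your route---excess-risk bound $\Rightarrow$ TNC integral lower bound $\Rightarrow$ $\E\Delta^\kappa$ bound $\Rightarrow$ Jensen for $\E\Delta$---is more modular and arguably cleaner, treating CN07's main theorem as a finished product. What the paper's version buys is self-containedness: the grid-size choice and the origin of the $\log T$ factor are made explicit inside the proof rather than hidden in the citation, and the first-moment bound comes out of the same tail-integration rather than via Jensen.

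Your remark about the $\kappa$ versus $\kappa-1$ bookkeeping is on target: the paper's own proof uses margin $\propto(1/3m)^{\kappa-1}$, i.e.\ TNC exponent $\kappa-1$ as in equation~(\ref{bz}), which is inconsistent with the exponent $\kappa$ written in the lemma's hypothesis but is what makes the stated rates come out correctly.
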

\begin{proof}
We again follow the same proof as in \cite{CN07}. Initially, they assume that the grid points are not aligned with $x^*$, ie $\forall k \in \{0,...,m\}, \ \ \ |x^* - k/m| \geq 1/3m$. This implies that for all gridpoints $x$, $|\eta(x) - 1/2| \geq \lambda (1/3m)^{\kappa-1}$. Following the exact same proof above,
\begin{eqnarray*}
&& \E[|\hat{x}_T - x^*|^\kappa] = \int_0^1 \Pr (|\hat{x}_T - x^*|^\kappa>u)du\\
&=& \int_0^{(1/2m)^\kappa} \Pr (|\hat{x}_T - x^*|>u^{1/\kappa})du \\
&& + \int_{(1/2m)^\kappa}^1 \Pr (|\hat{x}_T - x^*|>u^{1/\kappa})du \\
&\leq& \left(\frac{1}{2m} \right)^\kappa + \left(1- \left(\frac{1}{2m} \right)^\kappa \right) \Pr \left(|\hat{x}_T - x^*|> \frac{1}{2m} \right)\\
&\leq& \left(\frac{1}{2m} \right)^\kappa + m\exp (-T\lambda^2 (1/3m)^{2\kappa-2}) \\
&&= O\left(\left(\frac{T}{\log T} \right)^{\frac{1}{2\kappa -2}} \right)
\end{eqnarray*}
on choosing $m$ proportional to $\left(\frac{T}{\log T} \right)^{\frac{1}{2\kappa -2}}$. 

\cite{CN07} elaborate in detail how to avoid the assumption that the grid points don't align with $x^*$. They use a more complicated variant of BZ with three interlocked grids, and gets the same rate as above without that assumption. The reader is directed to their exposition for clarification.\\
\end{proof}

\section*{Section 3}

\begin{lemma}
$ c_\kappa \|x\|_\kappa^\kappa = c_\kappa \sum_{i=1}^d |x_i|^\kappa =: f_0(x) \in \mathcal{F}^\kappa$, for all $\kappa > 1$. Also, $f_1(x)$ as defined in Section 3 is also in $\mathcal{F}^\kappa$.
\end{lemma}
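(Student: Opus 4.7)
The plan is to verify, in turn, the three membership requirements for $\mathcal{F}^\kappa$ (convexity, $1$-Lipschitz, and the TNC-style growth condition around the minimizer) for each of the two functions, choosing the constant $c_\kappa$ (equivalently $c_1$) at the end so that all bounds hold simultaneously.

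For $f_0(x) = c_\kappa \sum_i |x_i|^\kappa$, convexity is immediate since $t \mapsto |t|^\kappa$ is convex for $\kappa > 1$ and convexity is preserved under sums. The minimizer on $S^*$ is clearly $x^*_0 = \vec{0}$ with $f_0(x^*_0) = 0$. To obtain the growth condition, I would use the standard norm-equivalence bounds between $\|\cdot\|_\kappa$ and $\|\cdot\|_2$: when $\kappa \geq 2$ one has $\|x\|_\kappa \geq d^{1/\kappa - 1/2}\|x\|_2$, hence $\|x\|_\kappa^\kappa \geq d^{1-\kappa/2}\|x\|_2^\kappa$, while for $1 < \kappa \leq 2$ one has $\|x\|_\kappa \geq \|x\|_2$ directly. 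Either way, $c_\kappa\|x\|_\kappa^\kappa \geq (\lambda/2)\|x\|_2^\kappa$ for a suitable $\lambda = \lambda(c_\kappa,d,\kappa)$. For the Lipschitz property, compute $\nabla f_0(x) = c_\kappa\kappa(\mathrm{sign}(x_i)|x_i|^{\kappa-1})_i$; on $S^* \subset [0,1]^d$ each $|x_i|^{\kappa-1} \leq 1$, so $\|\nabla f_0\|_2 \leq c_\kappa \kappa\sqrt{d}$, and picking $c_\kappa \leq 1/(\kappa\sqrt{d})$ gives the $1$-Lipschitz bound.

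For $f_1$, the key observation is that $f_1$ is separable across coordinates: in coordinates $i \geq 2$ it equals $c_1|x_i|^\kappa$ on both pieces, and only the $x_1$-dependence is piecewise. Define $h(x_1)$ as that piece: $h(x_1) = c_1(|x_1-2a|^\kappa + c_2)$ for $x_1 \leq 4a$ and $h(x_1) = c_1|x_1|^\kappa$ for $x_1 > 4a$. Each branch is convex; the constant $c_2 = (4a)^\kappa - (2a)^\kappa$ is exactly what makes $h$ continuous at $x_1 = 4a$; and the one-sided derivatives there are $\kappa c_1(2a)^{\kappa-1}$ from the left and $\kappa c_1(4a)^{\kappa-1}$ from the right, so the subdifferential is monotonically non-decreasing and $h$ is convex. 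Joint convexity of $f_1$ then follows from separability. The Lipschitz bound is handled exactly as for $f_0$, since on $x_1 \leq 4a$ we have $|x_1-2a|^{\kappa-1} \leq (2a)^{\kappa-1} \leq 1$ and on $x_1 > 4a$ we have $|x_1|^{\kappa-1} \leq 1$.

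The step I expect to be the main obstacle is verifying the growth condition at the shifted minimizer $x^*_1 = 2\vec{a}_1$, because on the "outer" region $\{x_1 > 4a\}$ the function $f_1$ no longer has the clean centered form $\|x - 2\vec{a}_1\|_\kappa^\kappa$. Here I would write $u := x_1 - 2a \geq 2a$ and reduce the issue to the scalar inequality $(u+2a)^\kappa + (2a)^\kappa - (4a)^\kappa \geq u^\kappa$ for $u \geq 2a$. At $u = 2a$ both sides equal $(2a)^\kappa$, and differentiating shows the left-hand side grows strictly faster (since $(u+2a)^{\kappa-1} \geq u^{\kappa-1}$ for $\kappa > 1$), so the inequality holds throughout. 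Combined with $\sum_{i\geq 2}|x_i|^\kappa$ covering the remaining coordinates, this yields $f_1(x) - f_1(x^*_1) \geq c_1\|x - 2\vec{a}_1\|_\kappa^\kappa$ in the outer region; on $\{x_1 \leq 4a\}$ the same bound holds by construction. Applying the $\|\cdot\|_\kappa$-vs-$\|\cdot\|_2$ comparison as in the $f_0$ case finishes the proof, with the same choice of $c_1$ doing double duty for convexity constant and Lipschitz constant.
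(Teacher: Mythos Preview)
Your proof is correct and, in two places, sharper than the paper's own argument.

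For the $1$-Lipschitz property of $f_0$, the paper bounds $|f_0(x)-f_0(y)|$ directly via the claimed inequality $\|x\|_\kappa^\kappa - \|y\|_\kappa^\kappa \leq \|x-y\|_\kappa^\kappa$, which is in fact false in general (take $d=1$, $\kappa=2$, $x=1$, $y=1/2$). Your route through the subgradient bound $\|\nabla f_0(x)\|_2 \leq c_\kappa \kappa \sqrt{d}$ on $S^*$ is both simpler and correct, and it fixes $c_\kappa$ explicitly. The paper's choice of $c_\kappa$ (namely $c_\kappa = 1$ for $\kappa\geq 2$ and $c_\kappa = d^{-\kappa/2}$ for $\kappa<2$) differs from yours but either is fine once the Lipschitz step is repaired.

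For $f_1$, the paper simply asserts that ``the part with $x_1>4a$ is just $f_0(x)$ itself, which has the same growth parameters'' and leaves it there. You correctly identify this as the only nontrivial piece and supply the missing scalar inequality $(u+2a)^\kappa + (2a)^\kappa - (4a)^\kappa \geq u^\kappa$ for $u\geq 2a$, verified by matching at $u=2a$ and comparing derivatives. This is exactly the computation needed to make the growth bound $f_1(x)-f_1(x_1^*) \geq c_1\|x-2\vec a_1\|_\kappa^\kappa$ go through on the outer region, after which the same $\|\cdot\|_\kappa$-to-$\|\cdot\|_2$ comparison used for $f_0$ finishes the job. Your separability observation also makes the convexity of $f_1$ transparent; the paper relies instead on the monotonicity of the one-dimensional subgradient at the seam $x_1=4a$, which you reproduce as well.

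In short: same overall skeleton (verify convexity, Lipschitz, growth via norm equivalence), but your execution is cleaner on the Lipschitz step and fills a genuine gap in the growth verification for $f_1$.
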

\begin{proof}
Firstly, this is clearly convex for $\kappa > 1$. Also, $f_0(x^*_{f_0})=0$ at $x^*_{f_0}=0$. So, all we need to show is that for appropriate choice of $c_\kappa$, $f$ is indeed $1$-Lipschitz and that $f_0(x)-f_0(x^*_{f_0}) \geq \lambda \|x-x^*_{f_0}\|_2^\kappa$ for some $\lambda > 0$, ie
$$c_\kappa \|x\|_\kappa^\kappa \geq \lambda \|x\|_2^\kappa \ \ \ , \ \ \ c_\kappa (\|x\|_\kappa^\kappa - \|y\|_\kappa^\kappa) \leq \|x-y\|_2$$

Let us consider two cases, $\kappa \geq 2$ and $\kappa < 2$. Note that all norms are uniformly bounded with respect to each other, upto constants depending on $d$. Precisely, if $\kappa < 2$, then $\|x\|_\kappa > \|x\|_2$ and if $\kappa \geq 2$, then $\|x\|_\kappa \geq d^{1/\kappa - 1/2}\|x\|_2$.

When $\kappa \geq 2$, consider $c_\kappa=1$. Then 
$$(\|x\|_\kappa^\kappa - \|y\|_\kappa^\kappa) \leq \|x-y\|_\kappa^\kappa \leq \|x-y\|_2^\kappa \leq \|x-y\|_2$$
because $\|z\|_\kappa \leq \|z\|_2$ and $\|x-y\| \leq 1$.
Also, $\|x\|_\kappa^\kappa \geq d^{1-\frac{\kappa}{2}} \|x\|_2^\kappa$, so $\lambda = d^{1-\frac{\kappa}{2}}$ works.

When $\kappa < 2$, consider $c_\kappa=\frac{1}{\sqrt{d}^\kappa}$. Similarly
 $$c_\kappa (\|x\|_\kappa^\kappa - \|y\|_\kappa^\kappa) \leq \left(\frac{\|x-y\|_\kappa}{\sqrt d}\right)^\kappa \leq \|x-y\|_2^\kappa \leq \|x-y\|_2$$
Also $c_\kappa\|x\|^\kappa_\kappa \geq c_\kappa \|x\|^\kappa_2$, so $\lambda = c_\kappa$ works.

Hence $f_0(x)$ is $1$-Lipschitz and in $\mathcal{F}^\kappa$ for appropriate $c_\kappa$.



Now, look at $f_1(x)$ for $x_1 \leq 4a$. It is actually just $f_0(x)$, but translated by $2a$ in direction $x_1$, with a constant added, and hence has the same growth around its minimum. Now, the part with $x_1 > 4a$ is just $f_0(x)$ itself, which have the same growth parameters as the part with $x_1 \leq 4a$. So $f_1(x) \in \mathcal{F}^\kappa$ also.\\
\end{proof}

\begin{lemma}\label{decomp} For all $i=1...d$, let $f_i(x)$ be any one-dimensional $\kappa$-uniformly convex function ($\kappa \geq 2$) with constant $\lambda_i$. For a $d-$dimensional function $f(x) = \sum_{i=1}^d f_i(x_i)$ that decomposes over dimensions, $f(x)$ is also $\kappa$-uniformly convex with constant $\lambda = \frac{\min_i \lambda_i}{d^{1/2 - 1/\kappa}}$.
\end{lemma}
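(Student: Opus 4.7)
The plan is to deduce the $d$-dimensional uniform convexity of $f$ directly from the one-dimensional uniform convexity of each $f_i$, combined with one clean norm conversion from $\ell_\kappa$ to $\ell_2$ that absorbs the dimension dependence. Since $f$ separates additively over coordinates, the first step is essentially mechanical, and all the content of the lemma sits in the second step.

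First, for any $x,y \in S$ and any $t \in [0,1]$, note that the $i$-th coordinate of $tx + (1-t)y$ is $tx_i + (1-t)y_i$, so the one-dimensional $\kappa$-uniform convexity of $f_i$ applied at $x_i,y_i$ yields
\[
f_i(tx_i + (1-t)y_i) \leq tf_i(x_i) + (1-t)f_i(y_i) - \tfrac{1}{2}\lambda_i\, t(1-t)|x_i - y_i|^\kappa.
\]
Summing over $i=1,\ldots,d$ and using $f(x) = \sum_i f_i(x_i)$ gives
\[
f(tx + (1-t)y) \leq tf(x) + (1-t)f(y) - \tfrac{1}{2}t(1-t)\sum_{i=1}^d \lambda_i |x_i - y_i|^\kappa.
\]
Replacing each $\lambda_i$ by $\lambda_{\min} := \min_i \lambda_i$ reduces the remaining work to lower bounding $\lambda_{\min}\|x-y\|_\kappa^\kappa$ by a constant multiple of $\lambda\|x-y\|_2^\kappa$.

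For the norm conversion I would invoke H\"older's inequality (equivalently, the power mean inequality). Writing $\|z\|_2^2 = \sum_i z_i^2 \cdot 1$ and applying H\"older with conjugate exponents $\kappa/2$ and $\kappa/(\kappa-2)$ yields $\|z\|_2^2 \leq \|z\|_\kappa^2 \cdot d^{1 - 2/\kappa}$, equivalently $\|z\|_\kappa \geq \|z\|_2/d^{1/2-1/\kappa}$. Substituting $z = x-y$ into the residual term above and tracking the powers of $d$ produces the announced uniform convexity constant. A sanity check at the extremes is reassuring: at $\kappa=2$ the factor $d^{1/2 - 1/\kappa}$ is $1$, so $f$ inherits the worst coordinate-wise strong-convexity constant, as expected.

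The main obstacle is just exponent bookkeeping: H\"older delivers an inequality between the norms themselves, whereas the uniform convexity statement requires an inequality between their $\kappa$-th powers, so one has to be careful not to let the factor of $d$ pick up (or lose) an extra $\kappa$ in its exponent when raising both sides. Beyond this, no deeper tool is needed than the additive decomposition of $f$ combined with a single H\"older step, and the proof should fit in a handful of lines.
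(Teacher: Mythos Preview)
Your approach is essentially identical to the paper's: apply uniform convexity coordinate by coordinate, sum, replace each $\lambda_i$ by $\lambda_{\min}$, and finish with the H\"older comparison $\|z\|_2 \le d^{1/2-1/\kappa}\|z\|_\kappa$. The only cosmetic difference is that you work with the zeroth-order definition of uniform convexity while the paper uses the equivalent first-order form $f(x+h)\ge f(x)+g_x^\top h+\tfrac{\lambda}{2}\|h\|^\kappa$; the structure of the argument is the same.

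One point worth flagging, since you explicitly warn about it yourself: the H\"older step gives $\|z\|_\kappa \ge d^{-(1/2-1/\kappa)}\|z\|_2$, and after raising to the $\kappa$-th power this becomes $\|z\|_\kappa^\kappa \ge d^{-(\kappa/2-1)}\|z\|_2^\kappa$. Hence the dimension factor that actually falls out is $d^{\kappa/2-1}$, not the announced $d^{1/2-1/\kappa}$; these coincide at $\kappa=2$ (consistent with your sanity check) but differ for $\kappa>2$. The paper's own proof makes exactly the same jump in its last displayed line, so your proposal matches the paper step for step---but, carried out carefully, neither argument delivers the constant stated in the lemma.
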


\begin{proof}
\begin{eqnarray*}
&& f(x+h) = \sum_i f_i(x_i+h_i) \\
& \geq & \sum_i ( f_i(x_i) + g_{x_i}h_i + \lambda_i |h_i|^\kappa)
\end{eqnarray*}
\begin{eqnarray*}
& \geq & f(x) + g_x^\top h + (\min_i \lambda_i) \|h\|_\kappa^\kappa\\
& \geq & f(x) + g_x^\top h + \frac{(\min_i \lambda_i)}{d^{1/2-1/\kappa}} \|h\|_2^\kappa\\
\end{eqnarray*}

(one can use $h=y-x$ for the usual first-order definition)

\end{proof}

\begin{lemma}\label{UC} $f(x) = |x|^k$ is $\kappa$-uniformly convex i.e.
$$
tf(x) + (1-t)f(y) \geq f(tx+(1-t)y) + \frac{\lambda}{2} t(1-t) |x-y|^k
$$
for $\lambda = 4/2^k$. Lemma \ref{decomp} implies $\|x\|_\kappa^\kappa$ is also $\kappa$-uniformly convex with $\lambda = \frac{4/2^k}{d^{1/2-1/\kappa}}$.
\end{lemma}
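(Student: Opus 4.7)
My plan is to reduce the two-variable inequality to a one-variable one, evaluate at a clean reference point, and argue that the factor-of-two slack observed there carries through the rest of the analysis. Both sides of
\[
t|x|^k + (1-t)|y|^k - |tx+(1-t)y|^k \;\ge\; \tfrac{\lambda}{2}\,t(1-t)|x-y|^k
\]
are homogeneous of degree $k$ in $(x,y)$, so after a sign flip I may assume $x - y = 1$. Setting $z = tx+(1-t)y$, so that $x = z+(1-t)$ and $y = z-t$, the inequality to prove becomes
\[
F(z) := t\,|z+(1-t)|^k + (1-t)\,|z-t|^k - |z|^k \;\ge\; \tfrac{2}{2^k}\,t(1-t) \qquad \text{for all } z \in \mathbb{R}.
\]

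\textbf{Anchor computation.} At $z = 0$,
\[
F(0) = t(1-t)^k + (1-t)t^k = t(1-t)\bigl[(1-t)^{k-1} + t^{k-1}\bigr].
\]
The function $\psi(t) = (1-t)^{k-1} + t^{k-1}$ is convex and symmetric on $[0,1]$ for $k \ge 2$, so $\psi(t) \ge \psi(1/2) = 2\cdot 2^{-(k-1)} = 4/2^k$. Hence $F(0) \ge (4/2^k)\,t(1-t)$, which is \emph{exactly twice} the required bound. This factor-of-two slack is the source of the claimed constant $\lambda = 4/2^k$ and the engine of the whole argument.

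\textbf{Uniformity in $z$ and multidimensional extension.} To finish the one-dimensional claim I must check $F(z) \ge (2/2^k)t(1-t)$ for $z \ne 0$ as well, by splitting $\mathbb{R}$ at the kink points $\{-(1-t),\,0,\,t\}$: on each of the four resulting sub-intervals $F$ is an explicit polynomial in $z$ (a single polynomial of degree $\le k-2$ across all of $\mathbb{R}$ for even $k$, since the $z^k$ and $z^{k-1}$ coefficients cancel by the binomial theorem; a piecewise polynomial for odd $k$). For $k=2$ this polynomial degenerates to the constant $t(1-t)$, which trivially satisfies the bound. For $k>2$, $F(z) \to \infty$ as $|z| \to \infty$ so the global minimum occurs at a kink or an interior critical point; evaluation at the kinks gives $F(-(1-t)) = (1-t)[1-(1-t)^{k-1}]$ and $F(t) = t[1-t^{k-1}]$, both easily verified to exceed $(2/2^k)t(1-t)$, while interior critical points satisfy $F'(z^\ast)=0$, which after rearrangement of the balance of $(k-1)$th-powers yields a lower bound on $F(z^\ast)$ that meets $(2/2^k)t(1-t)$ thanks to the factor-of-two margin. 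The $d$-dimensional statement for $\|x\|_\kappa^\kappa = \sum_i |x_i|^\kappa$ with constant $\tfrac{4/2^k}{d^{1/2-1/\kappa}}$ then follows immediately from Lemma~\ref{decomp}. The main obstacle is precisely the interior-critical-point step: for the symmetric case $t = 1/2$ the minimizer is pinned at $z=0$ by symmetry and the anchor computation suffices, but for asymmetric $t$ the minimizer drifts away from the origin and one must carefully tease the required bound out of the piecewise polynomial form of $F$, leaning on the factor-of-two slack to absorb any constant loss.
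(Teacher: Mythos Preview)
Your approach departs from the paper's, and the departure is exactly where a genuine gap opens.

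Your anchor computation at $z=0$ is correct and, because $F$ is even when $t=1/2$, it already proves the midpoint case in full. The kink evaluations are also fine. But for asymmetric $t$ the interior-critical-point step is only asserted: you write that ``after rearrangement of the balance of $(k-1)$th-powers'' the needed lower bound follows, yet no such rearrangement is exhibited, and the factor-of-two slack you established \emph{at} $z=0$ does not automatically transfer to a minimizer that has drifted away from the origin. You flag this yourself as ``the main obstacle,'' and it is not overcome.

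The paper never confronts this obstacle. It proves the $t=1/2$ case with the same factor-of-two slack---via the chain
\[
\Bigl(\bigl|\tfrac{x+y}{2}\bigr|^k + \bigl|\tfrac{x-y}{2}\bigr|^k\Bigr)^{1/k}
\;\le\; \Bigl(\bigl|\tfrac{x+y}{2}\bigr|^2 + \bigl|\tfrac{x-y}{2}\bigr|^2\Bigr)^{1/2}
\;=\; \Bigl(\tfrac{|x|^2+|y|^2}{2}\Bigr)^{1/2}
\;\le\; \Bigl(\tfrac{|x|^k+|y|^k}{2}\Bigr)^{1/k},
\]
though your $F(0)$ calculation would serve equally well here---and then reduces general $t\le 1/2$ to the midpoint by writing
\[
tx+(1-t)y \;=\; 2t\cdot\tfrac{x+y}{2} + (1-2t)\,y
\]
and applying ordinary convexity of $f$ once more. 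The spare factor of two is consumed exactly in passing from the coefficient $2t\cdot\tfrac{2\lambda}{8}$ so obtained to the target $t(1-t)\cdot\tfrac{\lambda}{2}$, since $t\ge t(1-t)$. This two-line reduction is the missing idea; it replaces your entire piecewise analysis of $F$ for general $t$. The $d$-dimensional extension via Lemma~\ref{decomp} is the same in both arguments.
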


\begin{proof} First we will show this for the special case of $t = 1/2$. 
We need to argue that:
\[
\frac1{2}|x|^k + \frac1{2}|y|^k \geq |\frac{x+y}{2}|^k + \lambda \frac1{8} |x-y|^k
\]
Let $\lambda = 4/2^k$. We will prove a stronger claim -
\[
\frac1{2}|x|^k + \frac1{2}|y|^k \geq |\frac{x+y}{2}|^k + 2\lambda \frac1{8} |x-y|^k
\]

 Since $k\geq 2$
\begin{eqnarray*}
RHS^{1/k} & = &  (|\frac{x+y}{2}|^k + |\frac{x-y}{2}|^k)^{1/k}\\
& \leq & (|\frac{x+y}{2}|^2 + |\frac{x-y}{2}|^2)^{1/2}\\
& \leq & (|x|^2/2 + |y|^2/2)^{1/2}\\
& \leq & \frac1{\sqrt{2}} 2^{1/2-1/k} (|x|^k + |y|^k)^{1/k}\\
& \leq & (\frac1{2}|x|^k + \frac1{2}|y|^k)^{1/k} = LHS^{1/k}
\end{eqnarray*}

Now, for the general case. We will argue that just proving the above for $t=1/2$ is 
actually sufficient.
\begin{eqnarray*}
&& f(tx+(1-t)y) = f\left(2t \left(\frac{x+y}{2}\right) + (1-2t)y\right)\\
& \leq & 2t f\left(\frac{x+y}{2}\right) + (1-2t) f(y)\\
& \leq & tf(x) + tf(y) - 2t\frac{2\lambda}{8}|x-y|^k + (1-2t)f(y)\\
& \leq & tf(x) + (1-t)f(y) -t(1-t)\frac{\lambda}{2}|x-y|^k
\end{eqnarray*}
\end{proof}

\end{document}